\let\classAND\AND
\let\AND\relax
\let\AND\classAND
\newtheorem{theorem}{Theorem}
\newtheorem{lemma}{Lemma}
\newtheorem{proposition}{Proposition}
\newtheorem{assumption}{Assumption}
\def\S{{\mathcal{S}}}
\def\A{\mathcal{A}}
\def\P{\mathcal{P}}
\def \EE{\mathbb{E}}
\def\D{{D_{\pi_\theta}}}
\def\eqref#1{equation~\ref{#1}}
\def\1{\bm{1}}
\DeclareMathAlphabet{\mathsfit}{\encodingdefault}{\sfdefault}{m}{sl}
\SetMathAlphabet{\mathsfit}{bold}{\encodingdefault}{\sfdefault}{bx}{n}
\newcommand{\E}{\mathbb{E}}
\newcommand{\R}{\mathbb{R}}
\DeclareMathOperator*{\argmax}{arg\,max}
\DeclareMathOperator*{\argmin}{arg\,min}
\theoremstyle{plain}
\newcommand{\Prob}{\mathbb{P}}
\title{Towards Provable Log Density Policy Gradient}
\author{%
  Pulkit Katdare\thanks{Equal contribution.}\\
  Department of Electrical and Computer Engineering \\ 
  University of Illinois at Urbana-Champaign\\
  \texttt{katdare2@illinois.edu} 
  \AND
  Anant A. Joshi$^{*}$ \\
  Department of Mechanical Engineering\\
  University of Illinois at Urbana-Champaign
  \AND
  Katherine Driggs-Campbell \\
  Department of Electrical and Computer Engineering\\
  University of Illinois at Urbana-Champaign \\
}
\begin{document}

\maketitle

\begin{abstract}
Policy gradient methods are a vital ingredient behind the success of modern reinforcement learning. Modern policy gradient methods, although successful, introduce a residual error in gradient estimation. 
In this work, we argue that this residual term is significant and correcting for it could potentially improve sample-complexity of reinforcement learning methods. To that end, we propose log density gradient to estimate the policy gradient, which corrects for this residual error term. 
Log density gradient method computes policy gradient by utilising the state-action discounted distributional formulation. 
We first present the equations needed to exactly find the log density gradient for a tabular Markov Decision Processes (MDPs). 
For more complex environments, we propose a temporal difference (TD) method that approximates log density gradient by utilizing backward on-policy samples. 
Since backward sampling from a Markov chain is highly restrictive we also propose a min-max optimization that can approximate log density gradient using just on-policy samples. 
We also prove uniqueness, and convergence under linear function approximation, for this min-max optimization. Finally, we show that the sample complexity of our min-max optimization to be of the order of $m^{-1/2}$, where $m$ is the number of on-policy samples. 
We also demonstrate a proof-of-concept for our log density gradient method on gridworld environment, and observe that our method is able to improve upon the classical policy gradient method by a clear margin, thus indicating a promising novel direction to develop reinforcement learning algorithms that require fewer samples. 
\end{abstract}
\section{Introduction}
Policy gradient (PG) methods are a vital ingredient behind the success of modern reinforcement learning~\citep{DBLP:journals/nature/SilverSSAHGHBLB17, chatgpt,  DBLP:conf/icml/HaarnojaZAL18, DBLP:conf/nips/Kakade01}. 
The success of PG methods stems from their simplicity and compatibility with neural network-based function approximations~\citep{DBLP:conf/nips/SuttonMSM99, pg_baxter}. 
Although modern policy gradient methods like PPO and TRPO, have achieved excellent results in various on-policy tasks~\citep{DBLP:journals/corr/SchulmanWDRK17, DBLP:journals/corr/SchulmanLMJA15}, they require extensive hyper-parameter tuning.
Additionally, it has been shown by ~\citet{DBLP:conf/iclr/IlyasESTJRM20} that the estimation error between policy gradient estimated by the methods like PPO and the true policy gradient increases significantly during the training process. 
Classical policy gradient methods typically approximate gradient of the policy using Q-function estimated with discount factor strictly less than 1, which leads to a error in gradient estimation~\citep{DBLP:journals/neco/MorimuraUYPD10}. 
In this paper, we demonstrate that this error in indeed significant, in Figure \ref{fig:motivation}. 
We propose a novel algorithm to correct for this error which could potentially lead to a sample efficient reinforcement learning, thus enabling their deployment over a wide variety of complex scenarios. 
We call our method \textbf{log density gradient}. 
Our approach is based on the average state-action stationary distribution formulation of reinforcement learning, which allows for the estimation of policy gradient as a multiplication of the log density gradient and the reward function~\citep{DBLP:conf/nips/NachumCD019, DBLP:conf/icml/UeharaHJ20}. 
This separation results in an improved correlation with the true policy gradient and requires fewer hyperparameters. We show that our method is consistent with the classical policy gradient theorem \citep{DBLP:journals/ml/Sutton88} and also prove convergence properties and sample complexity.

Our main contributions are as follows. \textbf{1.} A novel method to exactly calculate policy gradient by using the average state-action discounted formulation of a reinforcement learning problem. We will also show that policy gradient estimated in this manner will correct for the residual error (as shown in Figure \ref{fig:motivation}) usually seen in practical application of classical policy gradient methods. \textbf{2.} A model-free Temporal Difference (TD) method for approximating policy gradient. We provide proof of contraction as well as convergence. However, there is a major drawback that it requires samples from the backward Markov chain (described in detail in the paper) which motivates the next contribution. \textbf{3.} A min-max optimization which yields the gradient of log density, and a model free TD method to implement it, with proof of convergence. We additionally show sample complexity of the order $O(m^{-1/2})$ for the projected version of the proposed TD method, where $m$ is the number of on-policy samples. Our method is competitive with the sample complexity of classical vanilla policy gradient methods~\citep{DBLP:conf/aistats/YuanGL22}.

Section \ref{sec:background} starts with problem formulation and motivation behind this paper. Section \ref{sec:related_work}, discusses prior work in policy gradient methods, temporal difference methods, and min-max problems in off-policy evaluation. Our main contributions are discussed in detail starting from Section \ref{sec:our_method} which starts with rigorously defining log density gradient. Additionally we also propose a TD approach to estimate log density gradient under strict reversibility assumptions, and we describe the issue caused by this assumption. In section \ref{sec:min_max}, to overcome this issue to propose a min-max variant that allows us to estimate log density gradient algorithm using empirical samples. We finally demonstrate a proof-of-concept of our algorithm in Section \ref{sec:experiments} which shows that log density gradient can be potentially sample efficient as compared to classical policy gradient methods. 
\section{Background and Motivation}\label{sec:background}
\textbf{Notation:} we let $(\cdot)^T$ denote matrix transpose, and let $e$ represent the vector of ones, the size of which would be clear from context.

We define Markov Decision Process (MDP) as a 6-tuple of $(\S, \A, \P, r, \gamma, d_0)$. Here, $\S$ is a finite state space of the MDP, $\A$ is a finite action space, $\P$ is the transition probability matrix, $r$ is the reward function and $d_0$ is the initial distribution. The reinforcement learning problems is optimise for a policy $\pi: \S \rightarrow \Delta(\A)$ that maximizes $J_\gamma(\pi)$, defined as 
\begin{align*}
    J_\gamma(\pi) &:= (1-\gamma)\EE[\sum_{t=0}^\infty \gamma^t r(s_t, a_t)|
    s_0 \sim d_0, a_t \sim \pi(\cdot | s_t), s_{t+1} \sim \mathcal{P}(\cdot | s_t, a_t)], \, \text{ for } \gamma \in [0,1) \\
    J_1(\pi) &:= \lim_{\mathcal{T} \rightarrow \infty} \EE[\frac{1}{\mathcal{T}}\sum_{t=0}^\mathcal{T} \gamma^t r(s_t, a_t),
    s_0 \sim d_0, a_t \sim \pi(\cdot | s_t), s_{t+1} \sim \mathcal{P}(\cdot | s_t, a_t) ], \, \text{ for } \gamma =1,
\end{align*}
where $\gamma \in [0,1]$ is the discounting factor which accounts for the impact of future rewards in present decision making. 
When $\gamma = 1$, $J_1(\pi)$ the scenario is called the average reward formulation. 
Most practical problems in reinforcement learning typically aim to solve for an optimal policy $\pi^* = \arg\max_{\pi} J_1(\pi)$. Modern reinforcement learning methods aim to parameterise policy with a set of parameters $\theta \in  \mathbb{R}^{n}$, where $n$ is the dimensions of the parameter space. We refer to such paremterisation as $\pi_\theta$. This kind of parameterisation enables us search for optimal set of parameters $\theta^*$ instead of a search over $\S \times \A$ which in practice could be very large. We define
\begin{align*}
    \theta^* := \argmax_{\theta \in \mathbb{R}^n} J_1(\pi_\theta).
\end{align*}
The Q-function $Q^{\pi_\theta}_\gamma$ is commonly used function used to describe the performance of an RL agent. Q-function calculates the long term (discounted) rewards accumulated by an agent following a fixed policy $\pi_\theta$ while starting from a state $s \in \S$ and taking an action $a \in \A$
\begin{subequations}
\begin{align}
    Q_\gamma^{\pi_\theta}(s, a) &:= \mathbb{E}[\sum_{t=0}^\infty \gamma^t r(s_t, a_t) | s_0 = s, a_0 = a, a_t \sim \pi_\theta(\cdot | s_t), s_{t+1} \sim \mathcal{P}(\cdot | s_t, a_t)] \\ 
     &= r(s, a) + \gamma \EE_{s' \sim \P(\cdot | s, a), a' \sim \pi_\theta(\cdot|s')}[Q_\gamma^{\pi_\theta}(s', a')]   \label{eq:bellman}
\end{align}
\end{subequations}
Where, equation \ref{eq:bellman} is called the Bellman Equation. Bellman equation is popularly used to estimate the Q-function using just empirical data collected on the MDP. Q-function approximation methods typically use $\gamma < 1$ for stable estimation of the Q-function. We also similarly define value function $V^{\pi_\theta}_\gamma(s) = \mathbb{E}_{a \sim \pi_\theta(\cdot | s)}[Q^{\pi_\theta}_\gamma(s, a)]$.

Modern RL algorithms generally solve for $\theta^*$ by estimating the gradient of policy performance $J_\gamma(\pi_\theta)$ with respect to policy parameters $\theta$. This is also commonly referred to as the policy gradient theorem $\nabla_\theta J_1(\pi_\theta)$ ~\citep{DBLP:conf/nips/SuttonMSM99} which says
\begin{align}\label{eq:policy_gradient} 
    \nabla_\theta J_\gamma(\pi_\theta) = \mathbb{E}_{(s, a) \sim d^{\pi_\theta}_\gamma}[Q^{\pi_\theta}_{\gamma}(s, a) \cdot \nabla_\theta \log {\pi_\theta}(a | s)], \quad \gamma \in [0, 1].
\end{align}
Here, $d^{\pi_\theta}_\gamma$ is the average state-action discounted stationary distribution, which is defined as the cumulative sum of discounted state-action occupancy across the time horizon.
\begin{subequations}\label{eq:discounted_form}
\begin{align}
    d_\gamma^{\pi_\theta}(s, a) &:= (1-\gamma)\sum_{t=0}^\infty \gamma^t \Prob(s_t = s, a_t = a |s_0 \sim d_0, a_t \sim \pi_\theta(s_t), s_{t+1} \sim \mathcal{P}(\cdot | s_t, a_t), \gamma < 1)   \\
    d_1^{\pi_\theta}(s,a) &:= \lim_{T \rightarrow \infty}\frac{1}{T}\sum_{t=0}^T \Prob(s_t = s, a_t = a |
    s_0 \sim d_0, a_t \sim \pi_\theta(s_t), s_{t+1} \sim \mathcal{P}(\cdot | s_t, a_t), \gamma = 1). 
\end{align}
\end{subequations}
In this paper we make a standard assumption that the Markov chain induced by policy $\pi_{\theta}$ is ergodic. In particular, this implies that  $d_\gamma^{\pi_\theta}(s, a) > 0$ for all state action pairs $(s,a)$ ~\citep{DBLP:books/wi/Puterman2014}.
In scenarios where we are trying to optimize for $J_1(\pi)$, estimating the policy gradient becomes difficult. This is because the Bellman equation cannot be used to estimate Q-function for $\gamma=1$. As a compromise policy gradient for average reward scenarios are instead approximated by calculating the Q-function for a discounting factor $\gamma < 1$, but close to 1,  and using that estimate in the policy gradient equation \ref{eq:policy_gradient}
\begin{subequations}\label{eq:practical_pg}
\begin{align}
    \hat{\nabla}_\theta J_1(\pi_\theta)&= \mathbb{E}_{(s, a) \sim d^{\pi_\theta}_1}[{Q^{\pi_\theta}_1 (s, a)} \cdot \nabla_\theta \log \pi_\theta(a | s)],\\
    &\approx \mathbb{E}_{(s, a) \sim d^{\pi_\theta}_1}[\textcolor{red}{Q^{\pi_\theta}_\gamma (s, a)} \cdot \nabla_\theta \log \pi_\theta(a | s)], \quad \gamma < 1. \label{eq:practical_pg2}
\end{align}
\end{subequations}
In this paper we argue that the policy gradient calculated in this manner induces a significant residual error, which keeps on compounding as the reinforcement learning training proceeds even leading to a sub optimal solution. The following equation, derived in Proposition \ref{prop:res-error} characterizes that error,
\begin{align}\label{eq:res-error}
    \nabla_\theta J_{1}(\pi_\theta) &= \mathbb{E}_{(s, a) \sim d^{\pi_\theta}_1}[ \color{red}{Q^{\pi_\theta}_\gamma (s, a)}  \color{black}{\cdot 
 \nabla_\theta \log \pi_\theta(a | s)} ]+ \underbrace{(1-\gamma) \EE_{(s, a) \sim d_1^{\pi_\theta}}[\nabla_\theta \log d_1^{\pi_\theta}(s) \cdot V^{\pi_\theta}_\gamma(s)]}_{\text{\textbf{Residual Error}}}
\end{align}
\todo[inline]{plots for justifying the significance of the  residual error}
In this paper, we prove that this residual error is significant. What more, we also propose another method to exactly obtain the policy gradient, which we call as the log density gradient. Our estimation of log density gradient utilises average state-action discounted distributional formulation of a reinforcement learning problem which re-states $J_\gamma(\pi)$ as expectation under $d^{\pi_\theta}_\gamma$~\citep{DBLP:conf/nips/NachumCD019, DBLP:conf/icml/UeharaHJ20} as
\begin{align*}
    J_\gamma(\pi) = \mathbb{E}_{(s, a) \sim d^{\pi_\theta}_\gamma}[r(s, a)].
\end{align*}
Under this formulation, policy gradient can similarly be obtained as
\begin{align}\label{eq:secondform}
    \nabla_\theta J_\gamma(\pi) = \mathbb{E}_{(s, a) \sim d^{\pi_\theta}_\gamma}[\nabla_\theta \log d^{\pi_\theta}_\gamma(s, a) \cdot r(s, a)].
\end{align}
We refer to $\nabla_\theta \log d^{\pi_\theta}_\gamma$ as the log density gradient. A key advantage of log density gradient is that it would allow us to approximate policy gradient for average reward scenarios in a provable manner. In this work, we show that log density gradient can be approximated even under average reward scenarios ($\gamma = 1$). To that end, we first propose a model based approach to approximate the log density gradient for tabular scenarios. Under reversibility assumptions we then show that this log density gradient can also be approximated using TD method. Since reversibility assumption is highly restrictive, we finally propose a min-max version of log density gradient which allows to approximate $\nabla_\theta \log d^{\pi_\theta}$ using empirical samples. Our experimental further show that this manner of policy gradient estimation can potentially make reinforcement learning sample efficient, thus helping them scale. 
\section{Related Work}\label{sec:related_work}
In this section we will discuss existing studies in policy gradient methods including the framework of log density gradient first introduced by \cite{DBLP:journals/neco/MorimuraUYPD10}. We also briefly discuss density ratio learning methods which have been very popular in off-policy evaluation. A short discussion on Temporal Difference (TD) learning methods may be found in Appendix \ref{sec:td_lit}.  
\subsection{Policy Gradient Methods}
Policy gradient methods are a widely studied topic in reinforcement learning. One of the earliest works in this area proposed a closed-form solution for evaluating policy gradients called the policy gradient theorem ~\citep{DBLP:conf/nips/SuttonMSM99}. Initially implementations for policy gradient methods used episodic estimates to update policy parameters~\citep{DBLP:journals/ml/Williams92} and GPOMDP~\citep{pg_baxter}. Unfortunately this way of implementing policy gradient suffered from high variance, thus inhibiting scalability to large problem spaces~\citep{DBLP:journals/corr/SchulmanMLJA15}. 
To address this problem actor-critic methods  approximate the Q-function or advantage function using an additional neural network, which are then used to update the policy~\citep{DBLP:conf/icml/MnihBMGLHSK16, DBLP:journals/corr/SchulmanMLJA15}. Furthermore policy gradient methods were even designed to make them compatible with deterministic policies ~\citep{DBLP:journals/corr/LillicrapHPHETS15, DBLP:conf/icml/SilverLHDWR14}. Recently Trust region methods, such as Trust Region Policy Optimization (TRPO) ~\citet{DBLP:journals/corr/SchulmanLMJA15} and Proximal Policy Optimization (PPO) ~\cite{DBLP:journals/corr/SchulmanWDRK17}, update policies while ensuring monotonic performance improvement. In this work, we argue that although these methods have been successful, their practical implementation still ignores a residual error, correcting for which may increase sample efficiency of policy gradient methods. To the best of our knowledge, Log density gradient has only been discussed in  ~\citet{DBLP:journals/neco/MorimuraUYPD10}. Here, the authors propose a TD method to estimate log density gradient for average reward scenarios by using reversible backward Markov chain. 
In this work, we re-introduce the idea of log density gradient but for a general discounting factor $\gamma \in [0, 1]$, as opposed to the case of $\gamma = 1$ considered in \cite{DBLP:journals/neco/MorimuraUYPD10}. We also show that this method of estimating the log density gradient is unique and tends to converge under linear function approximations. We further propose a min-max version of estimating log density gradient by utilizing Fenchel duality. We also prove that this min-max algorithm converges to a unique solution under linear function approximation assumption. 
\begin{figure}[tb]
    \centering
\includegraphics[width=0.9\textwidth]{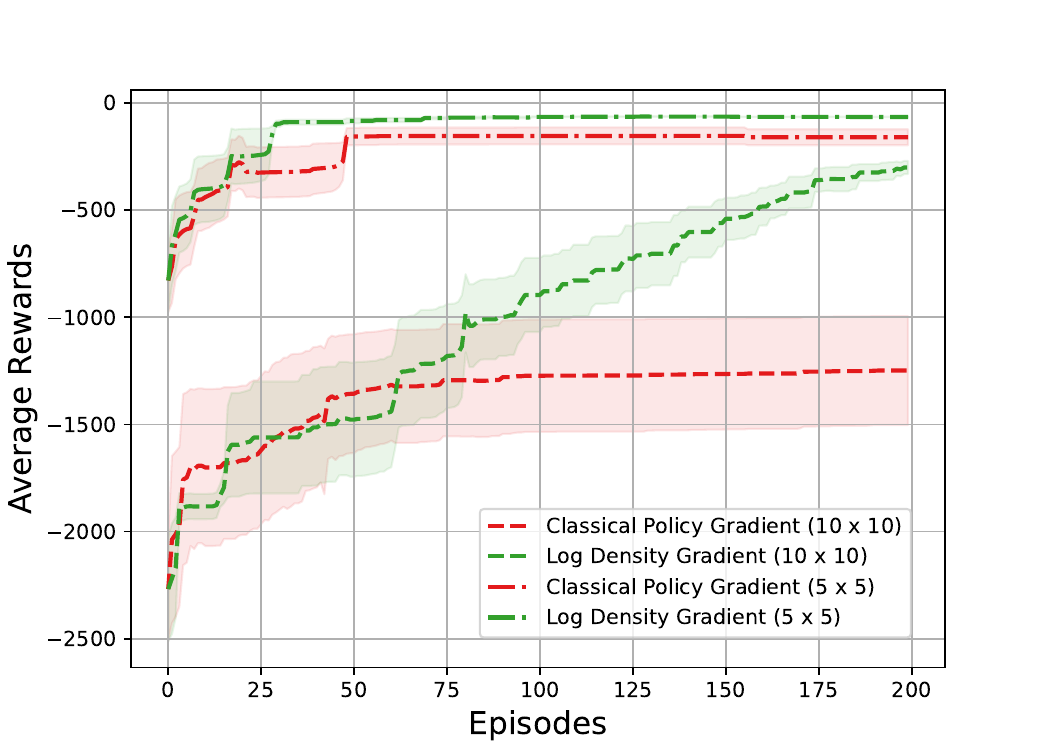}
    \caption{For the average reward scenario, performance of classical policy gradient (red) algorithm as compared to log density gradient (green) algorithm over a $n \times n$ gridworld environment, for $n=5,10$. We observe that log density gradient algorithm consistently converges to better policy performance. Theoretical calculated solutions are used for implementation. }
    \label{fig:motivation}
\end{figure}

\subsection{Density Ratio Learning}
Off-policy evaluation estimates the performance of a target policy $\pi$ using an offline dataset generated by a behavior policy $\mu$~\citep{DBLP:journals/corr/abs-1911-06854, DBLP:conf/icra/KatdareLC22}. This is done by estimating the average state-action density ratio $\frac{d^{\pi}_\gamma}{d^\mu}$, which allows approximation of the target policy's performance. In this work, we are primarily interested in the DICE class of off-policy evaluation algorithms ~\citep{DBLP:journals/corr/abs-2001-11113, DBLP:conf/nips/NachumCD019, DBLP:journals/corr/abs-2002-09072}. These algorithms typically approximate the divergence (for some $f-$divergence of their choice) between the two distributions $d^{\pi}$ and $d^{\mu}$ in their convex dual form, eliminating the need to obtain samples from $d^{\pi}$, which results in a min-max form optimization problem. Inspired by the DICE class of algorithms we too propose a min-max form of estimating the log density gradient. We show that such a method of estimating log density gradient converges to the true policy under linear function approximations assumptions. We also show that the sample complexity of such an estimator is of the order $(n^{-1/2})$, with $n$ being the number of on-policy samples. 
\section{Log Density Gradient}\label{sec:our_method}
In this section, we introduce log density gradient and further show that classical policy gradient typically ignores a residual error which may be significant. We then propose a TD version of our log density gradient method that estimates the policy gradient without a need to account for this error. 
\subsection{Model Based Log Density Gradient}
The log density gradient method attempts to estimate the gradient of log of average state-action discounted stationery distribution $d^{\pi_\theta}_\gamma$. We start by observing that $d_\gamma^{\pi_\theta}$ satisfies an identity called the Bellman flow equation ~\cite{DBLP:conf/nips/LiuLTZ18}.
\begin{lemma}\label{le:bellman_conservation}
The average state-action density distribution satisfies the following identity for all $\gamma \in [0, 1]$,
\begin{align}\label{eq:gradient_identity}
    d_\gamma^{\pi_\theta}(s') &= (1-\gamma)d_0(s') + \gamma \sum_{s, a} d_\gamma^{\pi_\theta}(s, a) \mathcal{P}(s' | s, a) 
\end{align}
\end{lemma}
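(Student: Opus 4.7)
The plan is to verify the Bellman flow identity by direct manipulation of the series (respectively, limit) definition of $d_\gamma^{\pi_\theta}$ given in \eqref{eq:discounted_form}, treating the two regimes $\gamma \in [0,1)$ and $\gamma = 1$ separately. Here the left-hand side $d_\gamma^{\pi_\theta}(s')$ is to be read as the state marginal $\sum_{a'} d_\gamma^{\pi_\theta}(s', a')$; the claim then becomes a statement about how the state occupancy at $s'$ decomposes into the mass placed at $s'$ by the initial distribution plus the mass flowing into $s'$ under one application of $\mathcal{P}$.

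For $\gamma \in [0,1)$, I would start by marginalizing the defining series over $a'$, yielding $d_\gamma^{\pi_\theta}(s') = (1-\gamma)\sum_{t=0}^\infty \gamma^t \Prob(s_t = s')$. I would then isolate the $t=0$ term, which equals $(1-\gamma) d_0(s')$, and re-index the remaining tail by writing $\Prob(s_{t+1} = s') = \sum_{s,a}\Prob(s_t = s, a_t = a)\,\mathcal{P}(s'\mid s, a)$ via total probability and the Markov property. Interchanging the finite sum over $(s,a)$ with the geometric series in $t$ (justified by absolute convergence since all quantities are nonnegative and the series is dominated by $\sum_t \gamma^t$) gives
\begin{align*}
    (1-\gamma)\gamma\sum_{t=0}^\infty \gamma^t \Prob(s_{t+1} = s') = \gamma \sum_{s,a} \left[(1-\gamma)\sum_{t=0}^\infty \gamma^t \Prob(s_t = s, a_t = a)\right] \mathcal{P}(s'\mid s, a),
\end{align*}
and the bracketed factor is precisely $d_\gamma^{\pi_\theta}(s,a)$. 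Adding the isolated initial term recovers \eqref{eq:gradient_identity}.

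For $\gamma = 1$, the claimed identity reduces to the stationarity relation $d_1^{\pi_\theta}(s') = \sum_{s,a} d_1^{\pi_\theta}(s,a)\,\mathcal{P}(s'\mid s,a)$, since the $(1-\gamma)d_0(s')$ term vanishes. I would derive this by applying the same one-step decomposition to $\frac{1}{T}\sum_{t=0}^{T}\Prob(s_t = s')$ and observing that shifting the time index by one changes the Cesàro average only by an $O(1/T)$ boundary correction; passing to the limit $T \to \infty$, which exists by the ergodicity assumption on the chain induced by $\pi_\theta$, yields the stationarity equation.

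The main obstacle is mostly bookkeeping: handling $\gamma=1$ cleanly requires invoking ergodicity to guarantee that the Cesàro limits in \eqref{eq:discounted_form} exist and are insensitive to finite-horizon boundary terms, and verifying that the interchange of the countable sum in $t$ with the finite sum over $(s,a)$ is legitimate in the $\gamma<1$ case. Both are standard consequences of nonnegativity plus Fubini/Tonelli, but should be stated explicitly for rigor. No deeper machinery is required beyond the Markov property and the definitions in \eqref{eq:discounted_form}.
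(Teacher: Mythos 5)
Your proof is correct and, for $\gamma<1$, follows essentially the same route as the paper: write out the series definition, relate $\Prob(s_{t+1}=s')$ to $\Prob(s_t=s,a_t=a)$ via the Markov property, re-index the geometric sum, interchange the sums over $t$ and $(s,a)$, and add back the $t=0$ term $d_0(s')$. The one substantive difference is in your favor: the paper's proof manipulates factors of $\tfrac{\gamma}{1-\gamma}$ and $\tfrac{1}{1-\gamma}$ throughout, so as written it is only valid for $\gamma\in[0,1)$ and never addresses the Cesàro-average case $\gamma=1$, even though the lemma is stated for all $\gamma\in[0,1]$. Your separate treatment of $\gamma=1$ --- reducing the identity to the stationarity equation $d_1^{\pi_\theta}(s')=\sum_{s,a}d_1^{\pi_\theta}(s,a)\mathcal{P}(s'\mid s,a)$ and obtaining it from the time-shifted Cesàro average up to an $O(1/T)$ boundary term, with ergodicity guaranteeing the limit exists --- closes a gap that the paper leaves open. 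Your explicit appeal to Tonelli for the sum interchange is also a point of rigor the paper glosses over, though as you note it is routine given nonnegativity.
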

We prove this result in the appendix \ref{proof:bellman_conservation}. Note that this equation is similar to the Bellman equation but in a backward manner, which means, we need samples from the backward conditional distribution (defined rigorously in Section \ref{sec:tdldg}). 
It can also be understood as a form of flow conservation, wherein the flow out of state $s'$ (LHS) would be equal to the flow into $s'$ (RHS). 

\begin{lemma}\label{le:bellman_conservation_uniqueness}
For all $\gamma \in [0, 1]$ the solution to the following optimisation is unique and equal to $d_\gamma^{\pi_\theta}$.
\begin{align}\label{eq:bellman_unique_1}
    \argmin_{w: \S \rightarrow \mathbb{R}} &\sum_{s'} \left( w(s') - (1-\gamma)d_0(s') + \gamma \sum_{s, a} w(s) \pi_\theta(a | s) \P(s' | s, a)\right)^2 + \frac{\lambda}{2} (\sum_{s} w(s) - 1)^2 
\end{align} 
\end{lemma}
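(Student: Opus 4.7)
The plan is to establish that the objective in \eqref{eq:bellman_unique_1} is nonnegative, that $d_\gamma^{\pi_\theta}$ drives both squared terms to zero (so the minimum value is $0$), and that this is the unique such $w$. Nonnegativity is immediate from squaring; that $d_\gamma^{\pi_\theta}$ vanishes on both terms follows from Lemma \ref{le:bellman_conservation} for the Bellman-flow residual together with the fact that $d_\gamma^{\pi_\theta}$ is a probability distribution (which sums to $1$ by construction from \eqref{eq:discounted_form}). Consequently, any minimizer $w^{\star}$ must simultaneously satisfy the Bellman-flow identity of Lemma \ref{le:bellman_conservation} and the normalization $\sum_s w^{\star}(s) = 1$, and the uniqueness claim reduces to showing these two conditions pin down $w^{\star}$ on $\R^{|\S|}$.

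For $\gamma \in [0,1)$ I would introduce the affine operator $(Tw)(s') := (1-\gamma) d_0(s') + \gamma \sum_{s} w(s)\, P^{\pi_\theta}(s'\mid s)$, where $P^{\pi_\theta}(s'\mid s) := \sum_a \pi_\theta(a\mid s)\,\P(s'\mid s,a)$ is the induced state-transition kernel. The linear part $\gamma P^{\pi_\theta}$, acting on the left, is a $\gamma$-scaled row-stochastic matrix, so $T$ is a $\gamma$-contraction in $\ell_1$. Banach's fixed-point theorem then yields a unique solution to $w = Tw$, which Lemma \ref{le:bellman_conservation} identifies as $d_\gamma^{\pi_\theta}$. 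As a sanity check one can sum the fixed-point equation over $s'$ to conclude $\sum_s w(s) = 1$ automatically, so the regularizer plays no essential role when $\gamma < 1$.

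The $\gamma = 1$ case is where the real difficulty lies and will be the main obstacle. The fixed-point equation degenerates to $w = w P^{\pi_\theta}$, i.e., $w$ is a left eigenvector of $P^{\pi_\theta}$ at eigenvalue $1$; the contraction argument breaks down and there is now an entire one-dimensional family of solutions in $\R^{|\S|}$. Here I would invoke the ergodicity assumption stated in Section \ref{sec:background}, which guarantees that the left $1$-eigenspace of $P^{\pi_\theta}$ is exactly one-dimensional and spanned by the stationary distribution of the induced chain. The normalization $\sum_s w(s) = 1$, now enforced through the regularizer term, singles out the unique stationary distribution, which by definition is $d_1^{\pi_\theta}$. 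I expect the write-up to split into these two regimes rather than attempt a unified treatment, since combining them would require Perron--Frobenius machinery that is overkill in the contracting case $\gamma<1$.
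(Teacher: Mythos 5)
Your proposal is correct and follows essentially the same route as the paper: both reduce the problem to the two quadratic residuals vanishing simultaneously, handle $\gamma<1$ by showing $I-\gamma\P_{\pi_\theta}^T$ is invertible (you package this as an $\ell_1$ contraction plus Banach's fixed-point theorem; the paper proves the equivalent $\ell_\infty$ lower bound on the transpose), and handle $\gamma=1$ by Perron--Frobenius ergodicity giving a one-dimensional eigenspace that the normalization $\sum_s w(s)=1$ then pins down. No gaps.
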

Detailed proof can be found in the appendix \ref{proof:bellman_conservation_uniqueness}. Intuitively speaking, the term $\frac{\lambda}{2} (\sum_{s} w(s) - 1)^2 $ is redundant for $\gamma <1$. It becomes useful for $\gamma = 1$ to ensure uniqueness. Although it is hard to estimate a closed form solution for $\nabla_\theta \log d^{\pi_\theta}_\gamma$, it is still possible to estimate it numerically. Using \eqref{eq:gradient_identity} we can similarly calculate an equivalent identity for $\nabla_{\theta}\log d_{\gamma}^{\pi_{\theta}}$
\begin{align}\label{eq:ldg_identity}
    d_\gamma^{\pi_\theta}(s') \nabla_\theta \log d_\gamma^{\pi_\theta}(s')  &= \gamma \sum_{s, a}d_\gamma^{\pi_\theta}(s, a) \mathcal{P}(s' | s, a) \nabla_\theta \log d_\gamma^{\pi_\theta}(s, a).
\end{align}
Multiplying both sides by $\pi_\theta(a' | s')$ and recalling that $d_\gamma^{\pi_\theta}(s, a) = d_\gamma^{\pi_\theta}(s) \pi_\theta (a | s)$, we obtain
\begin{align}\label{eq:ldg_condition}
    &d_\gamma^{\pi_\theta}(s', a') (\nabla_\theta \log d_\gamma^{\pi_\theta}(s', a') - \nabla_\theta \log \pi(a' | s'))  = \gamma \sum_{s, a}d_\gamma^{\pi_\theta}(s, a) \nabla_\theta \log d_\gamma^{\pi_\theta}(s, a) \mathcal{P}(s' | s, a) \pi_\theta(a' | s')
\end{align}
Similar to Lemma \ref{le:bellman_conservation_uniqueness} we can estimate $\nabla_\theta \log d_\gamma^{\pi_\theta}$ by solving for the equation \ref{eq:ldg_condition} above. To that end, we propose the following optimization (where $\lambda > 0$ is a fixed regularizer) 
\begin{align}\label{eq: opt_ldg}
&\min_{w: \mathcal{S} \times \mathcal{A} \rightarrow \mathbb{R}^{n}} 
\bigg\{ \EE_{(s', a') \sim d_\gamma^{\pi_\theta}} \| \nu(s',a') \| ^2   + \frac{\lambda}{2} \Big\| E_{(s, a) \sim d^{\pi_\theta}_\gamma}[w(s', a')] \Big\|^2 \bigg\} \\
    &\nu(s',a') := d_\gamma^{\pi_\theta}(s', a') (w(s', a') - \nabla_\theta \log \pi_\theta(a' |s')) - \gamma \sum_{s, a}d_\gamma^{\pi_\theta}(s, a) \mathcal{P}(s' | s, a) \pi_\theta(a'|s') w(s, a) \nonumber
\end{align}
\begin{lemma}\label{le:uniquenss}
The solution to \eqref{eq: opt_ldg} is unique and equal to $\nabla_\theta \log d_\gamma^{\pi_\theta}$ for all $ \gamma \in [0, 1]$.
\end{lemma}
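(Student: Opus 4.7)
Because the objective in \eqref{eq: opt_ldg} is a sum of two nonnegative terms, it suffices to show (i) that $w^\star := \nabla_\theta \log d_\gamma^{\pi_\theta}$ drives both terms to zero simultaneously, and (ii) that any other $w$ achieving the same must equal $w^\star$. Step (i) is immediate: substituting $w^\star$ into the definition of $\nu$ reproduces exactly the identity \eqref{eq:ldg_condition} derived earlier, so the first term vanishes, and the log-derivative identity gives the second since $\EE_{d_\gamma^{\pi_\theta}}[\nabla_\theta \log d_\gamma^{\pi_\theta}(s,a)] = \sum_{s,a}\nabla_\theta d_\gamma^{\pi_\theta}(s,a) = \nabla_\theta 1 = 0$. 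Establishing this shows $w^\star$ is a minimizer; the content of the lemma is uniqueness.

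\textbf{Reduction.} Suppose $w$ is any other optimizer. Since ergodicity gives $d_\gamma^{\pi_\theta}(s,a) > 0$, the objective is a sum of squares of quantities that must each vanish: $\nu(s',a') = 0$ pointwise and $\EE_{d_\gamma^{\pi_\theta}}[w] = 0$. Define $u := w - w^\star$. Subtracting \eqref{eq:ldg_condition} from $\nu(s',a')=0$ yields
\[
d_\gamma^{\pi_\theta}(s',a')\, u(s',a') \;=\; \gamma\, \pi_\theta(a'|s') \sum_{s,a} d_\gamma^{\pi_\theta}(s,a)\, \mathcal{P}(s'|s,a)\, u(s,a).
\]
Using $d_\gamma^{\pi_\theta}(s',a') = d_\gamma^{\pi_\theta}(s')\pi_\theta(a'|s')$ and dividing by $\pi_\theta(a'|s') > 0$, the right-hand side is independent of $a'$; therefore $u(s',a')$ depends only on $s'$, so write $u(s,a) =: \tilde u(s)$. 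Setting $f(s) := d_\gamma^{\pi_\theta}(s)\tilde u(s)$ and $\mathcal{P}^{\pi_\theta}(s'|s) := \sum_a \pi_\theta(a|s)\mathcal{P}(s'|s,a)$, the equation collapses to
\[
f(s') \;=\; \gamma \sum_s \mathcal{P}^{\pi_\theta}(s'|s)\, f(s).
\]

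\textbf{Closing both cases.} For $\gamma < 1$ the operator $\gamma \mathcal{P}^{\pi_\theta}$ has spectral radius strictly less than one (it is a substochastic scaling of a stochastic matrix), so the only solution is $f \equiv 0$, hence $\tilde u \equiv 0$ since $d_\gamma^{\pi_\theta}(s) > 0$. For $\gamma = 1$ the equation says $f$ is a left-fixed-point of the transition matrix of an ergodic chain, so by Perron--Frobenius $f = c\, d_1^{\pi_\theta}$ coordinatewise for some $c \in \sR^n$; equivalently $\tilde u \equiv c$. Now the regularizer kicks in: $\EE_{d_1^{\pi_\theta}}[w] = 0$ and $\EE_{d_1^{\pi_\theta}}[w^\star] = 0$ combine to give $c = \EE_{d_1^{\pi_\theta}}[u] = 0$. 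In both cases $u \equiv 0$, i.e. $w = w^\star$. The main obstacle is precisely the $\gamma = 1$ case: the first term of the objective alone admits a one-dimensional family of optima (indexed by $c$), and it is exactly the $\lambda$-regularizer that singles out the correct mean-zero representative, in direct parallel with the role of the analogous regularizer in Lemma \ref{le:bellman_conservation_uniqueness}.
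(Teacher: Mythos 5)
Your proof is correct and follows essentially the same route as the paper's: both arguments reduce optimality to the two quadratic terms vanishing, handle $\gamma<1$ via invertibility of $I-\gamma \P_{\pi_\theta}^T$ (spectral radius $<1$), handle $\gamma=1$ via Perron--Frobenius giving a one-dimensional ambiguity, and use the $\lambda$-regularizer to eliminate that ambiguity. Your intermediate observation that the difference $u=w-w^\star$ must be independent of the action, which collapses the system to the state-level chain, is a small simplification not made explicit in the paper (which works with the full state-action transition matrix), but the underlying mechanism is identical.
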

We describe the proof for this Lemma in the appendix \ref{proof:uniquenss}. It is worth noting that, the constraint $\frac{\lambda}{2} \| E_{(s, a) \sim d^{\pi_\theta}_\gamma}[w(s', a')] \|^2$ is only useful for $\gamma = 1$. The proof follows from the fact that the solution to the equation \ref{eq: opt_ldg} can be written in a linear form $A\cdot w = b$ and showing that A is invertible. It is worth reiterating that, once we have an estimate $\nabla_\theta \log d_\gamma^{\pi_\theta}$, we can use this estimate to approximate the policy gradient using equation \ref{eq:secondform}. We will now recall two important properties of log density gradient. 
\begin{proposition}\label{prop:consistency}
The policy gradient method as mentioned in equation \ref{eq:secondform} is exactly equal to the classical policy gradient ~\citep{DBLP:conf/nips/SuttonMSM99} as mentioned in equation \ref{eq:policy_gradient}.
\end{proposition}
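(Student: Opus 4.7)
The plan is to establish the equality by recognizing both sides as two different computations of the same quantity $\nabla_\theta J_\gamma(\pi_\theta)$. Given that, the consistency claim is immediate, and we avoid any direct algebraic manipulation between the two forms (which would otherwise require chaining through the Bellman flow identity of Lemma~\ref{le:bellman_conservation} and unrolling the recursion for $\nabla_\theta d_\gamma^{\pi_\theta}$).

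To derive equation~\ref{eq:secondform}, I would start from the distributional rewrite $J_\gamma(\pi_\theta) = \sum_{(s,a) \in \S \times \A} d_\gamma^{\pi_\theta}(s,a) \, r(s,a)$ that the paper states immediately before equation~\ref{eq:secondform}. Because $\S \times \A$ is finite and only $d_\gamma^{\pi_\theta}$ depends on $\theta$, I can pass the gradient inside the sum to obtain $\nabla_\theta J_\gamma(\pi_\theta) = \sum_{(s,a)} \nabla_\theta d_\gamma^{\pi_\theta}(s,a) \cdot r(s,a)$. The ergodicity assumption from Section~\ref{sec:background} ensures $d_\gamma^{\pi_\theta}(s,a) > 0$ for every $(s,a)$, which licenses the log-derivative identity $\nabla_\theta d_\gamma^{\pi_\theta}(s,a) = d_\gamma^{\pi_\theta}(s,a) \cdot \nabla_\theta \log d_\gamma^{\pi_\theta}(s,a)$. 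Substituting this back yields equation~\ref{eq:secondform} exactly.

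For the other direction, I would simply invoke the classical policy gradient theorem of Sutton et al., which is precisely equation~\ref{eq:policy_gradient}. Since the paper already cites this as a known result, the cleanest route is to treat it as a black box; a self-contained rederivation proceeds by differentiating the Bellman recursion $V_\gamma^{\pi_\theta}(s) = \sum_a \pi_\theta(a|s)\bigl[r(s,a) + \gamma \sum_{s'} \P(s'|s,a) V_\gamma^{\pi_\theta}(s')\bigr]$, unrolling, and regrouping via the definition of $d_\gamma^{\pi_\theta}(s)$ as a (discounted) state-occupancy measure. With both equation~\ref{eq:secondform} and equation~\ref{eq:policy_gradient} now identified with $\nabla_\theta J_\gamma(\pi_\theta)$, the two are equal and the proposition follows.

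The only subtlety I expect to face, and it is the main place where care is needed, is the boundary case $\gamma = 1$. There $J_1$ is defined as a Cesaro limit and $d_1^{\pi_\theta}$ as the stationary distribution of the induced chain, so interchanging the gradient with the limit or infinite sum must be justified. Under the ergodicity assumption, however, the stationary distribution depends smoothly on $\theta$ throughout the parametric family, and a standard dominated-convergence or uniform-geometric-ergodicity argument allows the interchange. For $\gamma \in [0,1)$, the finite-dimensional sum makes the interchange immediate, so apart from this one technicality the proof is essentially a single application of the log-derivative trick.
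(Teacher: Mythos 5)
Your argument is correct, but it proves the proposition by a genuinely different route than the paper. You show that each side independently equals $\nabla_\theta J_\gamma(\pi_\theta)$ — the log-density form via the log-derivative trick applied to $J_\gamma(\pi_\theta)=\sum_{s,a}d_\gamma^{\pi_\theta}(s,a)r(s,a)$ (using ergodicity to justify $\nabla_\theta d = d\,\nabla_\theta\log d$), and the classical form by citing the policy gradient theorem as a black box — and then conclude equality through the common value. The paper instead gives a direct algebraic derivation of one expression from the other: it substitutes $r(s,a)=Q^{\pi_\theta}_\gamma(s,a)-\gamma\,\EE_{s',a'}[Q^{\pi_\theta}_\gamma(s',a')]$ from the Bellman equation into \eqref{eq:secondform}, then multiplies the flow identity \eqref{eq:ldg_condition} for $d_\gamma^{\pi_\theta}\nabla_\theta\log d_\gamma^{\pi_\theta}$ by $Q^{\pi_\theta}_\gamma$ and sums, so that the $\gamma\,\EE[Q(s',a')]$ terms telescope and leave exactly $\EE[Q^{\pi_\theta}_\gamma\,\nabla_\theta\log\pi_\theta]$. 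Your route is shorter and modular, but it inherits whatever normalization conventions the cited classical theorem uses (the Sutton et al.\ statement is usually written with an unnormalized occupancy measure, so you would need to check that the $(1-\gamma)$ factors in the paper's definitions of $J_\gamma$ and $d_\gamma^{\pi_\theta}$ line up), and it treats the agreement as a coincidence of two valid formulas rather than exhibiting the mechanism. The paper's route is self-contained within its own machinery and, importantly, the intermediate manipulation (Bellman substitution plus the flow identity) is exactly what is reused in Proposition \ref{prop:res-error} to isolate the residual error term; your proof does not produce that identity. Your handling of the $\gamma=1$ boundary case is appropriately flagged and at least as careful as the paper's own treatment.
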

Detailed proof for this proposition can be found in Appendix \ref{sec:pg_consistency}. In essence this means that log density gradient calculates the same policy gradient but using a different formulation. We show next that this formulation allows us to correct for the residual error in policy approximation which is typically ignored in many actor-critic implementations of policy gradient methods. 
\begin{proposition}\label{prop:res-error}
    The following identity, stated in \eqref{eq:res-error}, is true
    \begin{align*}
    \nabla_\theta J_{1}(\pi_\theta) &= \mathbb{E}_{(s, a) \sim d^{\pi_\theta}_1}[ \color{red}{Q^{\pi_\theta}_\gamma (s, a)}  \color{black}{\cdot 
 \nabla_\theta \log \pi_\theta(a | s)} ]+ \underbrace{(1-\gamma) \EE_{(s, a) \sim d_1^{\pi_\theta}}[\nabla_\theta \log d_1^{\pi_\theta}(s) \cdot V^{\pi_\theta}_\gamma(s)]}_{\text{\textbf{Residual Error}}}
\end{align*}
\end{proposition}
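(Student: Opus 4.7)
The plan is to start from the log density gradient form of the average-reward policy gradient, namely \eqref{eq:secondform} with $\gamma=1$, and transform it into the claimed decomposition by introducing the discounted Bellman equation and invoking the flow identity \eqref{eq:gradient_identity} for $d_1^{\pi_\theta}$.

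Concretely, first I would decompose $\nabla_\theta \log d_1^{\pi_\theta}(s,a) = \nabla_\theta \log d_1^{\pi_\theta}(s) + \nabla_\theta \log \pi_\theta(a\mid s)$, which follows from factoring $d_1^{\pi_\theta}(s,a) = d_1^{\pi_\theta}(s)\pi_\theta(a\mid s)$. Second, using the Bellman equation \eqref{eq:bellman} I would substitute $r(s,a) = Q^{\pi_\theta}_\gamma(s,a) - \gamma\,\EE_{s'\sim\mathcal{P}(\cdot\mid s,a)}[V^{\pi_\theta}_\gamma(s')]$ inside the expectation, so that the policy gradient expands into a sum of four terms indexed by the two factors of the gradient decomposition and the two pieces of $r$.

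Third, I would simplify the $Q_\gamma$ piece paired with $\nabla_\theta \log d_1^{\pi_\theta}(s)$ by summing over actions first: since $\EE_{a\sim\pi_\theta(\cdot\mid s)}[Q^{\pi_\theta}_\gamma(s,a)]=V^{\pi_\theta}_\gamma(s)$, this piece collapses to $\EE_{s\sim d_1^{\pi_\theta}}[\nabla_\theta \log d_1^{\pi_\theta}(s)\cdot V^{\pi_\theta}_\gamma(s)]$. The main obstacle is handling the cross term $-\gamma\,\EE_{(s,a)\sim d_1^{\pi_\theta}}[\nabla_\theta \log d_1^{\pi_\theta}(s,a)\cdot \EE_{s'}[V^{\pi_\theta}_\gamma(s')]]$; for this I would invoke the $\gamma=1$ specialization of \eqref{eq:ldg_identity}, which reads $d_1^{\pi_\theta}(s')\nabla_\theta \log d_1^{\pi_\theta}(s') = \sum_{s,a} d_1^{\pi_\theta}(s,a)\,\mathcal{P}(s'\mid s,a)\,\nabla_\theta \log d_1^{\pi_\theta}(s,a)$. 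Pushing the sum over $s,a$ through this identity converts the cross term into $-\gamma\,\EE_{s\sim d_1^{\pi_\theta}}[\nabla_\theta \log d_1^{\pi_\theta}(s)\cdot V^{\pi_\theta}_\gamma(s)]$, which differs in sign from the $Q_\gamma$ piece handled above.

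Fourth, I would also note that the term pairing $\nabla_\theta \log \pi_\theta(a\mid s)$ with the $-\gamma\,\EE_{s'}[V^{\pi_\theta}_\gamma(s')]$ piece of $r$ is actually absorbed by this same flow identity reasoning applied to the policy factor; more cleanly, combining the two $V_\gamma$ contributions collected so far with the $d_1^{\pi_\theta}(s)\cdot V_\gamma$ term produced in step three yields exactly the prefactor $(1-\gamma)$ on the residual-error term. The remaining summand is $\EE_{(s,a)\sim d_1^{\pi_\theta}}[\nabla_\theta \log \pi_\theta(a\mid s)\cdot Q^{\pi_\theta}_\gamma(s,a)]$, matching the first term in the claimed identity, and the proof concludes by assembling these pieces. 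The only nontrivial bookkeeping is tracking which expectations are over $d_1^{\pi_\theta}(s,a)$ versus the state-marginal $d_1^{\pi_\theta}(s)$, and using the flow identity in the right direction so that the $V_\gamma$ terms consolidate with the correct sign.
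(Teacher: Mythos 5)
Your proposal is correct and follows essentially the same route as the paper's proof: start from the log density gradient form \eqref{eq:secondform} at $\gamma=1$, substitute the Bellman decomposition $r = Q_\gamma - \gamma\,\EE_{s'}[V_\gamma]$, split $\nabla_\theta \log d_1^{\pi_\theta}(s,a)$ into state and policy factors, and collapse the cross term via the $\gamma=1$ flow identity \eqref{eq:ldg_identity}. Your write-up actually makes explicit the step the paper leaves terse (applying the flow identity to the full $\nabla_\theta\log d_1^{\pi_\theta}(s,a)$ so the $V_\gamma$ terms consolidate into the $(1-\gamma)$ prefactor), so no gaps.
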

\begin{proof}
From the definition of log density gradient \eqref{eq:secondform} we have 
    $\nabla_\theta J_1(\pi_\theta) = \EE_{(s, a) \sim d_1^{\pi_\theta}}[\nabla_\theta \log d_1^{\pi_\theta}(s, a) \cdot r(s, a)]$. 
Let $\gamma < 1$, and we use the Bellman equation \ref{eq:bellman} to obtain
\begin{subequations}
    \begin{align}
    \nabla_\theta J_1(\pi_\theta) &= \EE_{(s, a) \sim d_1^{\pi_\theta}}[\nabla_\theta \log d_1^{\pi_\theta}(s, a) \cdot (Q^{\pi_\theta}_{\gamma}(s, a)-\gamma \EE_{s' \sim \mathcal{P}(\cdot | s, a)}[V^{\pi_\theta}_{\gamma}(s')])] \label{eq:residue_1}\\
    &= \EE_{(s, a) \sim d_1^{\pi_\theta}}[(\nabla_\theta \log d_1^{\pi_\theta}(s) + \nabla_\theta \log {\pi_\theta}(a | s)) \cdot (Q^{\pi_\theta}_{\gamma}(s, a)-\gamma \EE_{s' \sim \mathcal{P}(\cdot | s, a)}[V^{\pi_\theta}_{\gamma}(s')])] \label{eq:residue_2} \\
    &= \EE_{(s, a) \sim d_1^{\pi_\theta}}[ Q^{\pi_\theta}_{\gamma}(s, a) \cdot \nabla_\theta \log \pi_\theta(a | s)] + (1-\gamma) \EE_{s \sim d_1^{\pi_\theta}}[\nabla_\theta \log d_1^{\pi_\theta}(s, a) \cdot V^{\pi_\theta}(s)] \label{eq:residue_3} 
\end{align}
Here, we go from equation \ref{eq:residue_1} to \ref{eq:residue_2} by utilizing $\nabla_\theta \log d^{\pi_\theta}_\gamma(s, a) = \nabla_\theta \log d^{\pi_\theta}_\gamma (s) + \nabla_\theta \log \pi_\theta(a | s)$. We finally go from \ref{eq:residue_2} to \ref{eq:residue_3} by using a key identity of log density gradient equation \ref{eq:ldg_condition}. 

using identity for log density gradient as shown in the equation \ref{eq:ldg_identity}. 
Note that the first term in equation \ref{eq:residue_2} is the practical instantiation of classical policy gradient theorem, while the second term $(1-\gamma) \mathbb{E}_{s \sim d^{\pi_\theta}_1}[\nabla_\theta \log d^{\pi_\theta}_1(s) \cdot V^{\pi_\theta}_\gamma(s)]$ being the residual error. 
This completes the proof. 
\end{subequations}
\end{proof}
Actor-Critic implementation of policy gradient methods first approximate the Q-function using a discounting factor $\gamma$ strictly less than 1 and then approximate the policy gradient $\nabla_\theta J_1(\pi_\theta)$ using equation \ref{eq:practical_pg2}. This leads to a residual error in policy gradient approximation as shown in Proposition \ref{prop:res-error}.  We believe that correcting for this gradient estimation can potentially make policy gradient algorithms sample efficient and scalable to complex problems. 

Although the solution to equation \eqref{eq: opt_ldg} is unique, solving it requires access the transition matrix $\P$ which is impractical for complex environments. Additionally, the log density gradient estimation also becomes computationally infeasible with exponential scaling of state-action. We thus propose a temporal difference approach to estimate log density gradient from empirical data next . 
\subsection{Temporal Difference Log Density Gradient}
\label{sec:tdldg}
To propose an update equation for temporal difference (TD) method, we first begin with re-arranging few terms in equation \ref{eq:ldg_condition} for log density gradient. 
\begin{align*}
    \nabla_\theta \log d^{\pi_\theta}_\gamma(s', a') = \nabla_\theta \log \pi_\theta(a' | s') +  \gamma\sum_{s, a} \frac{d^{\pi_\theta}_\gamma(s, a) \P(s' | s, a) \pi_\theta(a' | s')}{d^{\pi_\theta}_\gamma(s' ,a')}  \nabla_\theta \log d^{\pi_\theta}_\gamma(s, a)
\end{align*}
We define the backward distribution of $(s,a)$ given $(s',a')$ as
\begin{align*}
    \mathcal{P}_b(s,a|s',a') \coloneqq \frac{d^{\pi_\theta}_\gamma(s, a) \P(s' | s, a) \pi_\theta(a' | s')}{d^{\pi_\theta}_\gamma(s' ,a')} = \frac{d^{\pi_\theta}_\gamma(s, a) \P^{\pi_{\theta}}(s',a' | s, a) }{d^{\pi_\theta}_\gamma(s' ,a')}
\end{align*}
which is a consequence of Bayes' rule. The summation therefore becomes an expectation under $\mathcal{P}_b$. The log density gradient is therefore said to follow a backward recursion and it requires samples from backward conditional probability $\mathcal{P}_b$ to estimate log density gradient\footnote{Although for $\gamma=1$ we can use samples from $\mathcal{P}$ as well \citep{DBLP:journals/neco/MorimuraUYPD10} }.  We first generalize algorithm of  \cite{DBLP:journals/neco/MorimuraUYPD10}, who do it only for $\gamma = 1$, to estimate log density gradient $w$ for all discounting factor $\gamma \in [0, 1]$ in form of a temporal difference (TD(0)) method where our update equation is
\begin{align}\label{eq: td_ori}
    w(s', a') \leftarrow w(s', a') + \alpha[\gamma w(s, a) + g(s', a') - w(s', a')]
\end{align}
with $(s', a') \sim d_\gamma^{\pi_\theta}, (s,a) \sim \mathcal{P}_b(\cdot | s',a') $
%
and $g(s', a') := \nabla_\theta \log \pi_\theta(a'| s')$. 
Define operator $Y_\gamma$ to capture the behaviour of update rule \eqref{eq: td_ori} after taking expectation,
\begin{align*}
    (Y_\gamma \cdot w)(s', a') := \gamma \mathbb{E}_{(s, a) \sim d_\gamma^{\pi_\theta}}[ w(s, a)| (s', a')] +  g(s', a').
\end{align*}
We can write this in matrix form as follows, 
\begin{align}\label{eq:contraction_operator}
    Y_\gamma \cdot W = \gamma D_{\pi_\theta}^{-1} \P_{\pi_\theta}^\top \D W + G
\end{align}
where, $W \in \mathbb{R}^{|\S| \cdot | \A | \times n}$ is the matrix with every row corresponding to $w(s,a)$ for each state-action pair $(s,a)$. Similarly, $G \in \mathbb{R}^{|\S| \cdot | \A | \times n}$ has its rows as $\nabla_\theta \log \pi_\theta$ for each state-action pair. Let $\P_{\pi_\theta}, \D \in \mathbb{R}^{|\S| \cdot | \A | \times |\S| \cdot | \A |}$ where $ (\P_{\pi_\theta})_{((s,a),(s',a'))} = \Prob^{\pi_{\theta}}(s',a'|s,a)$ and $\D$ is a diagonal matrix whose every element correspond to $d_\gamma^{\pi_\theta}$ for each  state-action pair. We use this matrix form for the operator $Y_{\gamma}$ in the proof of the following lemma. 
\begin{lemma}\label{le:fixed_point}
   Let $w_0 \in \Delta(\S, \A )$ be an arbitrary initial guess.
   Let $w_k = Y_{\gamma} \cdot w_{k-1}$ for all natural numbers $k \ge 1$.  For $\gamma \in [0,1)$, the operator $Y_{\gamma}$ is a contraction, and $\{w_k\}_{k \ge 0}$ converges to a unique fixed point $\nabla_\theta \log d^{\pi_\theta}_\gamma$. 
\end{lemma}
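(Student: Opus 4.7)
The plan is to establish that $Y_\gamma$ is a contraction in the weighted Euclidean norm $\|W\|_\D^2 := \Tr(W^\top \D W) = \sum_{s,a} d_\gamma^{\pi_\theta}(s,a)\, \|w(s,a)\|_2^2$ on $\mathbb{R}^{|\S|\,|\A| \times n}$, and then invoke the Banach fixed-point theorem. Ergodicity of $\pi_\theta$ guarantees $d_\gamma^{\pi_\theta}(s,a) > 0$ everywhere, so this is a genuine norm. From the matrix form \eqref{eq:contraction_operator}, the difference $Y_\gamma W_1 - Y_\gamma W_2 = \gamma \D^{-1} \P_{\pi_\theta}^\top \D\, (W_1 - W_2)$ does not involve $G$, so it suffices to bound $\|\gamma M \Delta\|_\D$ in terms of $\|\Delta\|_\D$ for $M := \D^{-1} \P_{\pi_\theta}^\top \D$. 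A direct check shows that the $((s',a'),(s,a))$-entry of $M$ is precisely $\mathcal{P}_b(s,a \mid s',a')$, so the analysis reduces to studying the backward kernel in the weighted norm.

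The analytical heart of the argument would be to multiply Lemma \ref{le:bellman_conservation} by $\pi_\theta(a' \mid s')$ to obtain
\begin{equation*}
d_\gamma^{\pi_\theta}(s',a') = (1-\gamma)\, d_0(s')\pi_\theta(a' \mid s') + \gamma \sum_{s,a} d_\gamma^{\pi_\theta}(s,a)\, \P^{\pi_\theta}(s',a' \mid s,a).
\end{equation*}
Dividing by $d_\gamma^{\pi_\theta}(s',a')$ shows that the row sums of $\gamma \mathcal{P}_b$ equal $1 - (1-\gamma)\, d_0(s')\pi_\theta(a' \mid s')/d_\gamma^{\pi_\theta}(s',a') \le 1$, i.e., $\gamma \mathcal{P}_b$ is sub-stochastic. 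Applying Jensen's inequality column-wise, after augmenting each row of $\gamma \mathcal{P}_b$ with the missing probability mass paired with the value $0$, gives, for every column $f$ of $\Delta$,
\begin{equation*}
(\gamma M f)(s',a')^2 \le \gamma \sum_{s,a} \mathcal{P}_b(s,a \mid s',a')\, f(s,a)^2.
\end{equation*}
Summing against $d_\gamma^{\pi_\theta}(s',a')$ and applying the Bayes identity $d_\gamma^{\pi_\theta}(s',a') \mathcal{P}_b(s,a \mid s',a') = d_\gamma^{\pi_\theta}(s,a)\, \P^{\pi_\theta}(s',a' \mid s,a)$ lets the resulting double sum telescope to $\gamma \sum_{s,a} d_\gamma^{\pi_\theta}(s,a)\, f(s,a)^2$, producing $\|\gamma M f\|_\D^2 \le \gamma\, \|f\|_\D^2$. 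Summing over the $n$ columns of $\Delta$ then yields $\|\gamma M \Delta\|_\D \le \sqrt{\gamma}\, \|\Delta\|_\D$.

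Since $\sqrt{\gamma} < 1$ for every $\gamma \in [0,1)$, the Banach fixed-point theorem in the Hilbert space $(\mathbb{R}^{|\S|\,|\A|\times n}, \|\cdot\|_\D)$ supplies a unique fixed point $w^*$ and geometric convergence $\|w_k - w^*\|_\D \le \gamma^{k/2}\, \|w_0 - w^*\|_\D$. Identifying $w^*$ with $\nabla_\theta \log d_\gamma^{\pi_\theta}$ reduces to checking the fixed-point equation: dividing \eqref{eq:ldg_condition} through by $d_\gamma^{\pi_\theta}(s',a')$ gives exactly $\nabla_\theta \log d_\gamma^{\pi_\theta}(s',a') = g(s',a') + \gamma \sum_{s,a} \mathcal{P}_b(s,a \mid s',a')\, \nabla_\theta \log d_\gamma^{\pi_\theta}(s,a)$, which is the equation $Y_\gamma \cdot w = w$; uniqueness then forces $w^* = \nabla_\theta \log d_\gamma^{\pi_\theta}$.

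The main obstacle I anticipate is the Jensen step, since $\gamma \mathcal{P}_b$ is only sub-stochastic rather than stochastic (strictly so on any row where $d_0(s')\pi_\theta(a'\mid s') > 0$); the trick of augmenting each row with the missing mass paired with $0$ resolves this cleanly and delivers the contraction factor $\sqrt{\gamma}$ that we need. The sub-stochasticity could in fact be exploited to sharpen this constant further, but the $\sqrt{\gamma}$ bound already suffices for the lemma. A secondary bookkeeping issue is the matrix rather than vector structure of $W$, but since the operator acts column-wise and the $\D$-norm decomposes as a sum of squared column norms, this is immediate.
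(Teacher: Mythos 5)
Your proof is correct, and while it shares the paper's overall strategy (contraction of $Y_\gamma$ in a $d_\gamma^{\pi_\theta}$-weighted norm), the technical core is genuinely different. The paper proves contraction with modulus $\gamma$ in the $d_\gamma^{\pi_\theta}$-weighted $\ell_1$ norm, using only the triangle inequality and the fact that the forward kernel $\mathcal{P}(s'|s,a)\pi_\theta(a'|s')$ sums to one over $(s',a')$; it then identifies the limit by explicitly unrolling $Y_\gamma^k$ in matrix form and summing the Neumann series $\sum_{t}\gamma^{t}D_{\pi_\theta}^{-1}(\P_{\pi_\theta}^{\top})^{t}D_{\pi_\theta}G$. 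You instead work in the weighted $\ell_2$ norm, which needs the extra observation (via Lemma \ref{le:bellman_conservation}) that $\gamma\mathcal{P}_b$ is sub-stochastic so that Jensen's inequality applies after padding each row, and this yields the weaker but still sufficient modulus $\sqrt{\gamma}$; your identification of the limit is cleaner, since you verify directly from \eqref{eq:ldg_condition} that $\nabla_\theta\log d_\gamma^{\pi_\theta}$ solves $Y_\gamma w=w$ and let uniqueness from the Banach fixed-point theorem finish the job, rather than computing the series. Two things your route buys: first, your remark that $\mathcal{P}_b$ is only sub-normalized for $\gamma<1$ (because of the $(1-\gamma)d_0$ injection term) makes explicit a point the paper elides when it calls $\mathcal{P}_b$ a backward ``distribution''; second, your bookkeeping is tighter than the paper's displayed computation, which fails to cancel the outer weight $d_\gamma^{\pi_\theta}(s',a')$ against the denominator of $\mathcal{P}_b$ and then discards it with the crude bound $d_\gamma^{\pi_\theta}\le 1$. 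What the $\ell_1$ route buys in exchange is the sharper contraction constant $\gamma$ in place of $\sqrt{\gamma}$.
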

Detailed proof of Lemma \ref{le:fixed_point} can be found in Appendix \ref{sec:fixed_point}. 
Extension of Lemma \ref{le:fixed_point} to linear function approximation, and proof of convergence for the same, can be found in the Appendix \ref{sec:linear_td_details}. Although TD methods are known to converge, they still suffer from two problems. One, the access to samples from backward conditional probability. Two, scalability to large problem spaces. We attempt to solve both of these problems in the next section where we propose a min-max optimization procedure for estimating the log density gradient. 

\section{Min-Max Log Density Gradient}\label{sec:min_max}
In this section, we propose another approach for evaluating the log density gradient which uses the min-max form for optimization which does not need samples from the backward distribution. Min-max optimizations also allow us to use a large variety of function classes like neural networks to estimate log density gradient. 

Let us return to the loss function that we initially propose in  \eqref{eq: opt_ldg}. Classical machine learning algorithms typically require loss function in the form of an expectation under a distribution. These kind of algorithms are often called as Empiricial Risk Minimization, which allows us to approximate the loss using samples from that distribution. Consider a modified form of the optimization  proposed in \eqref{eq: opt_ldg} (the modification is that $\delta(s',a')$ is divided by $d_\gamma^{\pi_\theta}(s', a')$ where the ergodicity assumption ensures this operation is well defined),
\begin{align}
    &\argmin_{w \in \mathcal{S} \times \mathcal{A} \rightarrow \mathbb{R}^n} \mathbb{E}_{(s', a') \sim d_\gamma^{\pi_\theta}} \left[\left\|\frac{\nu(s', a')}{d_\gamma^{\pi_\theta}(s', a')}\right\|^2\right]+ \frac{\lambda}{2}\|\mathbb{E}_{(s, a) \sim d_\gamma^{\pi_\theta}}[w(s, a)]\|^2 \label{eq:opt_ldg_minmax} \\
    &\nu(s', a') := d_\gamma^{\pi_\theta}(s', a') (w(s', a') - \nabla_\theta \log \pi_\theta(a' |s'))- \gamma \sum_{s, a}d_\gamma^{\pi_\theta}(s, a) \mathcal{P}(s' | s, a) \pi(a'|s') w(s, a) \nonumber
\end{align}
The denominator term $d^{\pi_\theta}_\gamma$ is added to simplify the final optimization, which we shall see soon. We also add the term $\frac{\lambda}{2}\|\mathbb{E}_{(s, a) \sim d_\gamma^{\pi_\theta}}[w(s, a)]\|^2$ to satisfy one of the properties of the gradient of log density $\mathbb{E}_{(s, a) \sim d_\gamma^{\pi_\theta}}[\nabla_\theta \log d_\gamma^{\pi_\theta}(s, a)] = 0$.

It is worth noting that \eqref{eq:opt_ldg_minmax} is just a re-weighting of equation \ref{eq: opt_ldg} with the $\frac{1}{d^{\pi_\theta}_\gamma(s, a)}$. This implies that the optimal solution for the both the equation is the same because the minimum value for both the optimization can only be reached when $w(s, a) = \nabla_\theta \log d^{\pi_\theta}_\gamma(s, a)$.  By exploiting the Fenchel-duality, we can re-write this optimization in the minimax form~\citep{DBLP:books/degruyter/Rockafellar70, DBLP:journals/corr/abs-2001-11113}. 
\begin{align}
&\arg \min_{w: \mathcal{S} \times \mathcal{A} \rightarrow \mathbb{R}^d} \max_{f: \mathcal{S} \times \mathcal{A} \rightarrow \mathbb{R}^d, \tau \in \mathbb{R}^{d}} L_\gamma(w, f, \tau) := \bigg\{ \mathbb{E}_{(s', a') \sim d_\gamma^{\pi_\theta}}[f(s', a') \cdot w(s', a')] \nonumber  \\
&- \mathbb{E}_{(s', a') \sim d_\gamma^{\pi_\theta}}[f(s', a') \cdot \nabla_\theta \log \pi_\theta(a' | s')] - \gamma \mathbb{E}_{(s, a) \sim d_\gamma^{\pi_\theta}}[\mathbb{E}_{s' \sim \mathcal{P}(\cdot | s, a), a' \sim \pi_\theta(\cdot | s')}[f(s', a')] \cdot w(s, a)] \nonumber \\
&-\frac{1}{2} \mathbb{E}_{(s, a) \sim d_\gamma^{\pi_\theta}}[\|f(s', a')\|^2] + \lambda (\tau \cdot \mathbb{E}_{(s, a) \sim d_\gamma^{\pi_\theta}}[w(s, a)] - \frac{1}{2} \|\tau\|^2) \bigg\} \label{eq:opt_ldg_minmax2}
\end{align}
In many cases searching over all set of functions is not possible, hence we search over tractable function classes $\mathcal{W},\mathcal{F}$ and the aim is to approximate
\begin{align*}
    \nabla_\theta \log d^{\pi_\theta}_\gamma \approx \arg \min_{w \in \mathcal{W}} \max_{f \in \mathcal{F}, \tau \in \mathbb{R}^n} L_\gamma(w, f, \tau).
\end{align*}
Such a practical consideration allows us to use different types of function approximators like linear function approximation, neural networks, and reproducible kernel Hilbert spaces (RKHS). 
We will now provide an update rule to solve \eqref{eq:opt_ldg_minmax2} under linear function approximation. For that we choose a feature map $\Phi: \mathcal{S} \times \mathcal{A} \to \mathbb{R}^{d}$  and parameters $\alpha, \beta \in \mathbb{R}^{d \times n}$ 
that need to be learnt, 
so that we can approximate the optima of \eqref{eq:opt_ldg_minmax2},  $w^*(s, a)$ and $f^*(s,a)$ with  $\alpha^T \Phi(s, a)$, and $\beta^T \Phi(s, a)$ respectively, for each state action pair $(s,a)$. 
The update rule is
\begin{subequations} \label{eq:update_equation}
    \begin{align}
&\delta_t = \Phi_t \Phi_t^T - \gamma \Phi_t (\Phi_t')^T  
 \\
    &\alpha_{t+1}^T = \alpha_t^T - \varepsilon_t (\beta^T \delta_t + \lambda (\tau \Phi_t^T ) )  \\ 
    &\beta_{t+1}^T = \beta_t^T + \varepsilon_t( \alpha_t^T \delta_t - g_t\Phi_t^T- \beta_t^T \Phi_t \Phi_t^T)  \\
    &\tau_{t+1} = \tau_t + \varepsilon_t(\lambda (\alpha_t^T \Phi_t  - \tau_t)) 
\end{align}
\end{subequations}    
where, $\Phi_t := \Phi(s_t, a_t)$ is the feature encountered at time $t$, $g_t := \nabla_\theta \log \pi_\theta (a_t | s_t)$, and $\Phi_t' := \Phi_t(s_t', a_t')$ for $(s_t, a_t) \sim d^{\pi_\theta}_\gamma, s_t' \sim \mathcal{P}(\cdot | s_t, a_t), a_t' \sim \pi_\theta(a_t' | s_t')$. 
We first re-write the updates in equation \ref{eq:update_equation} in form of $d_t = [\alpha_t, \beta_t, \tau_t^T ]$ so that the updates can be written in matrix form $d_{t+1} = d_{t} + \varepsilon_t (G_{t+1} d_t + h_{t+1})$, where, $G_{t+1}, h_{t+1}$ are as follows, 
\begin{align*}
    G_{t+1} & := \begin{bmatrix}
    0 & -A_t & -\lambda \Phi_t  \\ A_t & -C_t & 0 \\
    \lambda \Phi_t^T & 0 & -\lambda 
    \end{bmatrix}, \quad h_{t+1} := \begin{bmatrix}
    0 \\
    -B_t \\
    0
    \end{bmatrix}
\end{align*}
and $A_t := (\Phi_t \Phi_t^T - \gamma \Phi_t (\Phi_t')^T), B_t := \Phi g_t^T, C_t := \Phi_t \Phi_t^T$. We can calculate the expectation for each of these matrices as follows, 
\begin{align*}
    G := \mathbb{E}_{p}[G_{t+1}] = \begin{bmatrix}
        0 & -A & -\lambda  \Psi\D e \\
        A & C & 0 \\
        \lambda e^T \D \Psi^T  & 0 & -\lambda
    \end{bmatrix}, \quad  h := \mathbb{E}_{(s, a) \sim d_\gamma^{\pi_\theta}}[h_{t+1}] = \begin{bmatrix}
        0 \\
        -B \\
        0
    \end{bmatrix}
\end{align*}
Here, each column of $\Psi \in \mathbb{R}^{|\mathcal{S}|\cdot |\mathcal{A}| \times n}$ is the feature vector $\Phi(s,a)$, for each $(s,a) \in \mathcal{S} \times \mathcal{A}$ and $e \in \mathbb{R}^n$ is a vector of 1's at every element. We can similarly write $A = \Psi\D(I - \gamma P_{\pi_\theta}) \Psi^T, B = \Psi \D G^T, C = \Psi \D \Psi^T$ and $E_p[\cdot] := \E_{(s, a) \sim d_\gamma^{\pi_\theta}, 
    s' \sim \mathcal{P}(\cdot | s, a), a' \sim \pi_\theta(\cdot | s')}[\cdot]$. We can now prove the convergence of linear function approximation under the following key assumptions. 
\begin{assumption}\label{ass:3}
   \begin{enumerate}
       \item The matrix $\Psi$ has linearly independent columns. 
       \item The matrix $A$ is non-singular or the regularizer $\lambda > 0$.
       \item The feature matrix $\Phi$ has uniformly bounded second moments.
   \end{enumerate}
\end{assumption}
\begin{theorem}\label{le:convergence_minmax}
Under the assumptions \ref{ass:3}, the update equation \ref{eq:update_equation} converges in probability to a unique solution. That is, $\lim_{t \rightarrow \infty} d_t = G^{-1}  h$ in probability. 
\end{theorem}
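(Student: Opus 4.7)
The plan is to reformulate the recursion in \eqref{eq:update_equation} as a standard linear stochastic approximation scheme and invoke the ODE method (in the spirit of Borkar--Meyn) to conclude convergence in probability to the unique equilibrium of the associated mean-field dynamics. Concretely, I would write
\begin{align*}
d_{t+1} = d_t + \varepsilon_t\bigl( G\, d_t + h\bigr) + \varepsilon_t M_{t+1},
\qquad
M_{t+1} := (G_{t+1}-G)d_t + (h_{t+1}-h),
\end{align*}
and note that $(M_{t+1})$ is a martingale difference sequence with respect to the natural filtration generated by the on-policy samples. Assumption \ref{ass:3}(3) (uniformly bounded second moments of $\Phi$) together with Lipschitz linear growth of $G_{t+1},h_{t+1}$ in $\Phi_t,\Phi_t'$ yields $\mathbb{E}[\|M_{t+1}\|^2 \mid \mathcal{F}_t] \leq K(1+\|d_t\|^2)$, which is one of the two standard hypotheses for linear stochastic approximation. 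Assuming the step sizes $\varepsilon_t$ satisfy the Robbins--Monro conditions $\sum_t \varepsilon_t = \infty$ and $\sum_t \varepsilon_t^2 < \infty$ covers the other.

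The heart of the argument is then to show that the deterministic ODE $\dot d(t) = G\, d(t) + h$ has a unique globally asymptotically stable equilibrium $d^{\star}$, which by $G d^{\star} + h = 0$ equals $-G^{-1} h$ (the paper's statement $G^{-1}h$ is the same equilibrium up to the sign convention chosen for $h$). Reduction to this ODE is classical once noise moments are bounded and $G$ is Hurwitz, so uniqueness of the fixed point and global asymptotic stability both reduce to showing that every eigenvalue of $G$ has strictly negative real part.

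The main obstacle, and the step that requires Assumption \ref{ass:3}(1)--(2), is the Hurwitz property of the non-symmetric, saddle-point-structured matrix
\begin{align*}
G = \begin{pmatrix} 0 & -A & -\lambda\,\Psi D e \\ A & -C & 0 \\ \lambda\, e^{\!\top}D\Psi^{\!\top} & 0 & -\lambda I \end{pmatrix}.
\end{align*}
The strategy is the standard trick used for GTD/TDC-type proofs: for any (complex) eigenpair $(\mu,x)$ with $x = (u_1,u_2,u_3)^{\!\top}\ne 0$, one has
\begin{align*}
\mathrm{Re}(\mu)\,\|x\|^2 \;=\; \mathrm{Re}(x^{*} G x) \;=\; x^{*} G_s\, x \;=\; -\,u_2^{*} C\, u_2 \;-\; \lambda\,\|u_3\|^2,
\end{align*}
since the off-diagonal $\pm A$ and $\pm\lambda\,\Psi D e$ pairs cancel in the symmetric part $G_s$. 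Assumption \ref{ass:3}(1) gives $C = \Psi D \Psi^{\!\top} \succ 0$, so the right-hand side is $\leq 0$; equality forces $u_2=0$ and, if $\lambda>0$, also $u_3=0$. Feeding $u_2=u_3=0$ back into the eigenvalue equation $Gx=\mu x$ gives $Au_1 = \mu\cdot 0 = 0$ and $\lambda v^{\!\top} u_1 = 0$, so Assumption \ref{ass:3}(2) (non-singularity of $A$, or $\lambda>0$ combined with the corresponding block) forces $u_1=0$, contradicting $x\ne 0$. Hence $\mathrm{Re}(\mu)<0$ for every eigenvalue, $G$ is Hurwitz and invertible, and $d^{\star}$ is the unique globally asymptotically stable equilibrium of the ODE.

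Putting the pieces together: with the martingale noise bound, Robbins--Monro step sizes, and the Hurwitz property of $G$, standard linear stochastic approximation theorems (e.g.\ Benveniste--Métivier--Priouret, Chapter 2, or Borkar's ODE theorem) imply $d_t \to d^{\star} = -G^{-1}h$ in probability (in fact almost surely), which is the claim. The bulk of the originality therefore lies in establishing the spectral property of the particular block matrix $G$ arising from the Fenchel-dualized min--max loss~\eqref{eq:opt_ldg_minmax2}, while the rest is a direct appeal to established stochastic approximation machinery.
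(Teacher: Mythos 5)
Your proposal follows essentially the same route as the paper's proof: the same decomposition into a mean drift $Gd_t+h$ plus a martingale difference, the same appeal to Borkar--Meyn-type stochastic approximation, and the same key spectral argument that the skew-symmetric off-diagonal blocks cancel so that $\mathrm{Re}(x^*Gx)=-u_2^*Cu_2-\lambda\|u_3\|^2$, with the zero-eigenvalue case excluded via non-singularity of $A$. Your version is in fact slightly more careful than the paper's (you correctly handle complex eigenpairs via the symmetric part, and you correctly note that the equilibrium is $-G^{-1}h$, a sign the paper's statement drops).
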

The detailed proof is provided in Appendix \ref{proof:convergence_minmax}. The proof is similar to ~\cite[Theorem 2]{DBLP:journals/corr/abs-2001-11113} and invokes theorem 2.2 \citet{10.1137/S0363012997331639}.

We provide a sample complexity analysis for a projected version of the update rule  \eqref{eq:update_equation}. To that end, we propose Algorithm \ref{alg:dice_ldg} called the Projected Log Density Gradient. 
We choose closed, bounded and convex sets $X \subset \mathbb{R}^{d \times n}, Y \subset \mathbb{R}^{d \times n}, Z \subset  \mathbb{R}^{1 \times n}$ and define a projection operator $\Pi_X, \Pi_Y, \Pi_Z$ that project our variables $\alpha_{t}, \beta_{t}, \tau_{t}$ onto $X,Y,Z$ respectively. Moreover, we choose a learning rate $\{\varepsilon_t\}_{t=1}^{m}$ where we run the algorithm for $m$ steps. The details of the choice of learning rate are found in Appendix \ref{sec:additional_theory}. 
\begin{theorem}\label{th:sample_complexity}
Under assumptions \ref{ass:3} for $(\bar{\alpha},\bar{\beta}, \bar{\tau})$ obtained from Algorithm \ref{alg:dice_ldg} after $m$ steps, the optimality gap $\epsilon_g(\bar{\alpha} \bar{\beta}, \bar{\tau})$ (defined below) is bounded with probability $1 - \delta$ as follows, 
\begin{align*}
    \epsilon_g(\bar{\alpha}, \bar{\beta}, \bar{\tau}) :=  \max_{(\beta, \tau) \in {Y \times Z}} L(\bar{\alpha}, \beta, \tau) - \min_{\alpha \in X} L(\alpha, \bar{\beta}, \bar{\tau}) \leq C_0\sqrt{\frac{5}{m}}(8 + 2 \log \frac{2}{\delta}) \text{ w.p. }  1-\delta
\end{align*}
where, $C_0$ is a constant which is a function of the sets $X, Y, Z$, and the second moment of $\Phi$. 
\end{theorem}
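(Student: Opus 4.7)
The plan is to treat this as a stochastic convex--concave saddle-point problem and invoke the standard analysis of projected stochastic gradient descent--ascent with iterate averaging. First, I would verify that when we plug the linear parametrisations $w = \alpha^T \Phi$, $f = \beta^T \Phi$ into \eqref{eq:opt_ldg_minmax2}, the resulting Lagrangian $L(\alpha, \beta, \tau)$ is affine (hence convex) in $\alpha$ and strongly concave in $(\beta, \tau)$ thanks to the quadratic terms $-\tfrac{1}{2}\mathbb{E}\|f\|^2$ and $-\tfrac{\lambda}{2}\|\tau\|^2$, with only bilinear couplings between primal and dual blocks. Compactness of $X,Y,Z$ (via the projections $\Pi_X, \Pi_Y, \Pi_Z$) then guarantees a finite saddle-value and makes the duality gap $\epsilon_g$ a meaningful optimality criterion.

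Second, I would show that the stochastic direction $(G_{t+1} d_t + h_{t+1})$ used in Algorithm~\ref{alg:dice_ldg} is an unbiased estimator of the true gradient $(G d_t + h)$ of $L$, and that its conditional second moment is uniformly bounded. Unbiasedness was already observed above, since $\mathbb{E}_p[G_{t+1}] = G$ and $\mathbb{E}_p[h_{t+1}] = h$. The boundedness of the second moment follows from combining Assumption~\ref{ass:3}.3 (uniformly bounded second moments of $\Phi$) with the compactness of $X \times Y \times Z$: every entry of $G_{t+1} d_t + h_{t+1}$ is a product of bounded iterate coordinates with bounded feature inner products. Let $M^2$ denote a uniform bound on this second moment.

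Third, I would apply the canonical averaging bound for stochastic saddle-point methods (in the style of Nemirovski--Juditsky--Lan--Shapiro). Fix the horizon $m$ and choose the constant step size $\varepsilon_t = c/\sqrt{m}$ tuned so that the Bregman/Euclidean distance terms on $X, Y, Z$ balance the second-moment terms. The standard telescoping argument, applied to the averaged iterate $(\bar\alpha, \bar\beta, \bar\tau) = \tfrac{1}{m}\sum_{t=1}^m (\alpha_t, \beta_t, \tau_t)$, yields
\begin{align*}
    \epsilon_g(\bar\alpha, \bar\beta, \bar\tau) \;\le\; \frac{D^2 + c^2 M^2}{c\sqrt{m}} + \frac{1}{m}\sum_{t=1}^m \langle \xi_t, z_t - z^\star \rangle,
\end{align*}
where $D$ is the diameter of $X \times Y \times Z$, $\xi_t := (G_{t+1} - G) d_t + (h_{t+1} - h)$ is the noise, and $z^\star$ is the worst-case comparator in the gap. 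The first term yields the $O(m^{-1/2})$ deterministic part; bundling constants gives something like the $8 C_0 \sqrt{5/m}$ piece of the claimed bound.

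The main obstacle will be the high-probability tail, i.e.\ turning the expected gap into the $2\log(2/\delta)$ dependence. The noise summand $\sum_t \langle \xi_t, z_t - z^\star\rangle$ is a martingale difference sequence (since $\xi_t$ is zero-mean conditional on the past and $z_t$ is measurable w.r.t.\ the past), with increments bounded by $2DM$ after projection. I would apply Azuma--Hoeffding to this martingale to get a deviation of order $DM\sqrt{\log(1/\delta)/m}$ with probability $1-\delta/2$; a separate application to the concentration of the stochastic gradients themselves consumes the other $\delta/2$ via a union bound. Absorbing all problem-dependent constants ($D$, $M$, $c$, $\lambda$) into a single $C_0$ and consolidating the additive numerical constants into the factor $(8 + 2\log(2/\delta))$ then yields the stated bound. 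The delicate part is bookkeeping: ensuring that the constants generated by the two concentration inequalities plus the deterministic bound combine exactly into the announced form, which is why $C_0$ must be allowed to depend on $X, Y, Z$ and on the second moment of $\Phi$.
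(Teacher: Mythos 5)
Your proposal is correct and follows essentially the same route as the paper: both reduce the problem to a stochastic convex--concave saddle-point analysis with projected stochastic primal--dual updates and iterate averaging, verify unbiasedness and bounded second moments of the stochastic gradients from Assumption \ref{ass:3} plus compactness of $X,Y,Z$, and obtain the $O(m^{-1/2})$ high-probability gap bound in the style of Nemirovski--Juditsky--Lan--Shapiro (the paper cites their Proposition 3.2 as a black box with step size $\varepsilon_t = c/(M_*\sqrt{t})$ and $\varepsilon$-weighted averaging, whereas you unpack the tail via Azuma--Hoeffding with a constant step size and uniform averaging). The only caveat, shared equally by the paper, is that the concentration step really needs more than bounded second moments of $\Phi$ (almost-sure or sub-Gaussian bounds on the noise increments), so your claim that the martingale increments are bounded by $2DM$ does not strictly follow from Assumption \ref{ass:3} alone.
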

We present the proof of this result in appendix \ref{sec:additional_theory}. This result essentially shows us that the upper-bound for log density gradient estimation requires $ O(\frac{1}{\sqrt{m}})$ (where $m$ is the number of steps the algorithm runs for) samples to learn an accurate estimation. 
\begin{algorithm}[!htb]
  \caption{Projected Log Density Gradient}\label{alg:dice_ldg}   
    \begin{algorithmic}[1]
    \STATE \textbf{for} {$t = 1, 2, ..., m$ do:}
    \STATE $\delta_t = \Phi_t \Phi_t^T - \gamma \Phi_t (\Phi_t')^T$
    \STATE $\alpha_{t+1}^T = \Pi_{X}(\alpha_t^T - \varepsilon_t (\beta^T \delta_t + \lambda (\tau \Phi_t^T ) ) )$
    \STATE $\beta_{t+1}^T = \Pi_{Y}(\beta_t^T + \varepsilon_t( \alpha_t^T \delta_t - g_t\Phi_t^T- \beta_t^T \Phi_t \Phi_t^T))$ 
    \STATE $\tau_{t+1} = \Pi_{Z}(\tau_t + \varepsilon_t(\lambda (\alpha_t^T \Phi_t  - \tau_t)))$
    \STATE \textbf{Return} $\bar{\alpha}$, $\bar{\beta}$, $\bar{\tau}$\\
    Where, $\bar{\alpha} = \frac{\sum_{i=1}^n \varepsilon_i \alpha_i}{\sum_{i=0}^{n}\varepsilon_i}$, $\bar{\beta} = \frac{\sum_{i=1}^n \varepsilon_i \beta_i}{\sum_{i=0}^{n}\varepsilon_i}$, $\bar{\tau} = \frac{\sum_{i=1}^n \varepsilon_i \tau_i}{\sum_{i=0}^{n}\varepsilon_i}$
  \end{algorithmic}
\end{algorithm}
\section{Experiments}\label{sec:experiments}
In this section, we present a proof of concept for our log density gradient estimation on two sets of environments $5 \times 5$ and $3 \times 3$ gridworld environment~\citep{gridworld}. For the gridworld experiments, we approximate log density gradient by using linear function approximation. 
Here, the features are $\phi: \mathcal{S} \times \A \rightarrow \mathbb{R^{|\mathcal{S}| \cdot |\mathcal{A}|}}$ such that it maps every state to the corresponding standard basis vector. 
Our results for $5 \times 5$ are in Figure \ref{fig:gridworld} and for $3 \times 3$ in Appendix \ref{app:num-fig} Figure \ref{fig:gridworld3}.

We compare our algorithm against 3 different baselines. The first is theoretical log density gradient as described in Lemma \ref{le:uniquenss}. The second baseline implements REINFORCE algorithm, which is the practical rendition of the policy gradient theorem ~\citep{DBLP:journals/ml/Williams92}. The third is theoretical policy gradient method which
exactly computes the  classical policy gradient theorem, as in \eqref{eq:practical_pg2} ~\citep{DBLP:conf/nips/SuttonMSM99}.

We observe in that both log density gradient approaches are more sample efficient than both policy gradient approaches. 
%
This is because policy gradient methods approximate the gradient for average reward scenarios ($\gamma=1$) by estimating a Q-function for a discounting factor less than 1. 
%
Moreover, we observe that our method tends to outperform REINFORCE with much reduced variance. 
%
Our approach is always very close in performance to the theoretical log density gradient which serves to validate correctness of our algorithm.
In $5 \times 5$ gridworld we also observe our algorithm to outperforms theoretical log density gradient. This is because, theoretical log density gradient suffers from some numerical computation issues arising from average reward scenarios. 
%
%
 \begin{figure}[!htb]
\centering
\includegraphics[width=\textwidth]{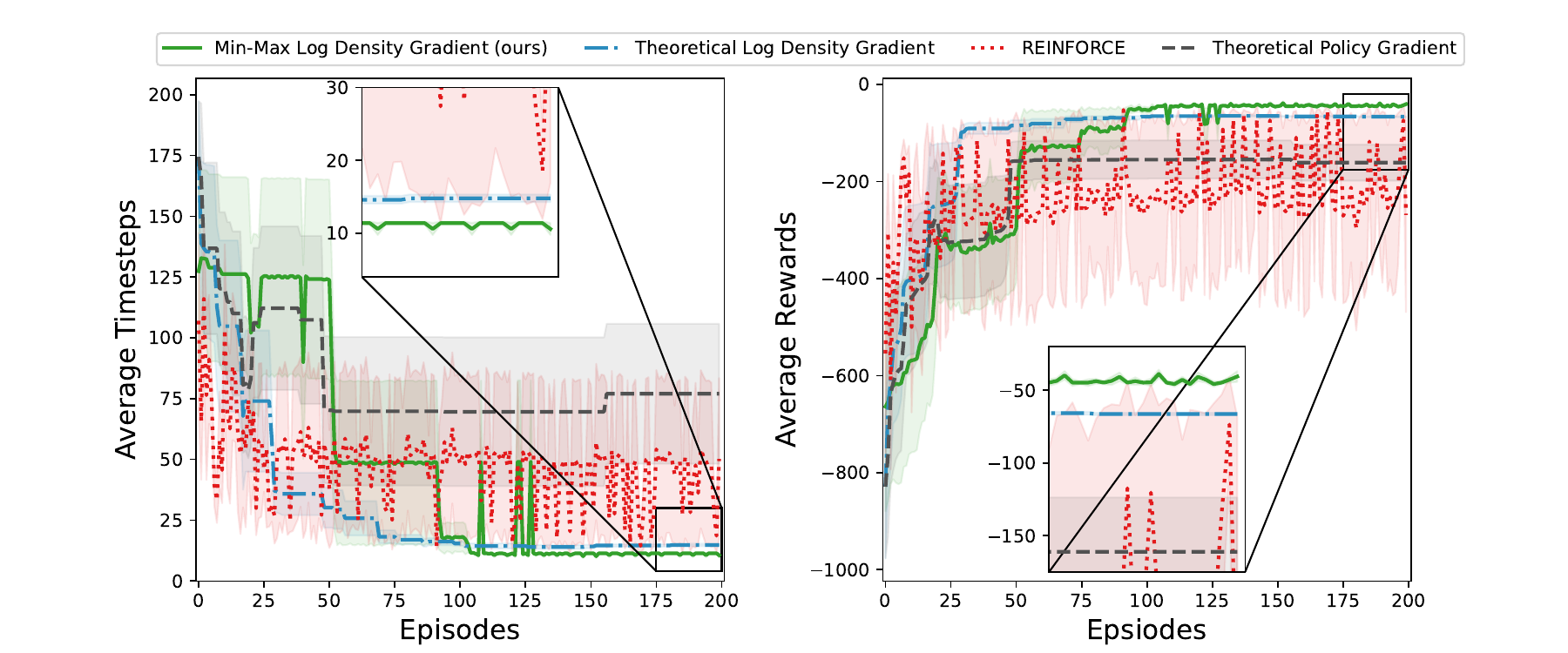}
\caption{For $5 \times 5$ gridworld, comparison of Log Density Gradient algorithms (in light green) as compared to REINFORCE (light red), theoretical policy gradient (gray) and theoretical log density gradient (blue). 
We observe that our empirical algorithm comfortably outperforms the other baselines.
}
\label{fig:gridworld}
\end{figure}
\begin{figure}[!htb]
\centering
\includegraphics[width=\textwidth]{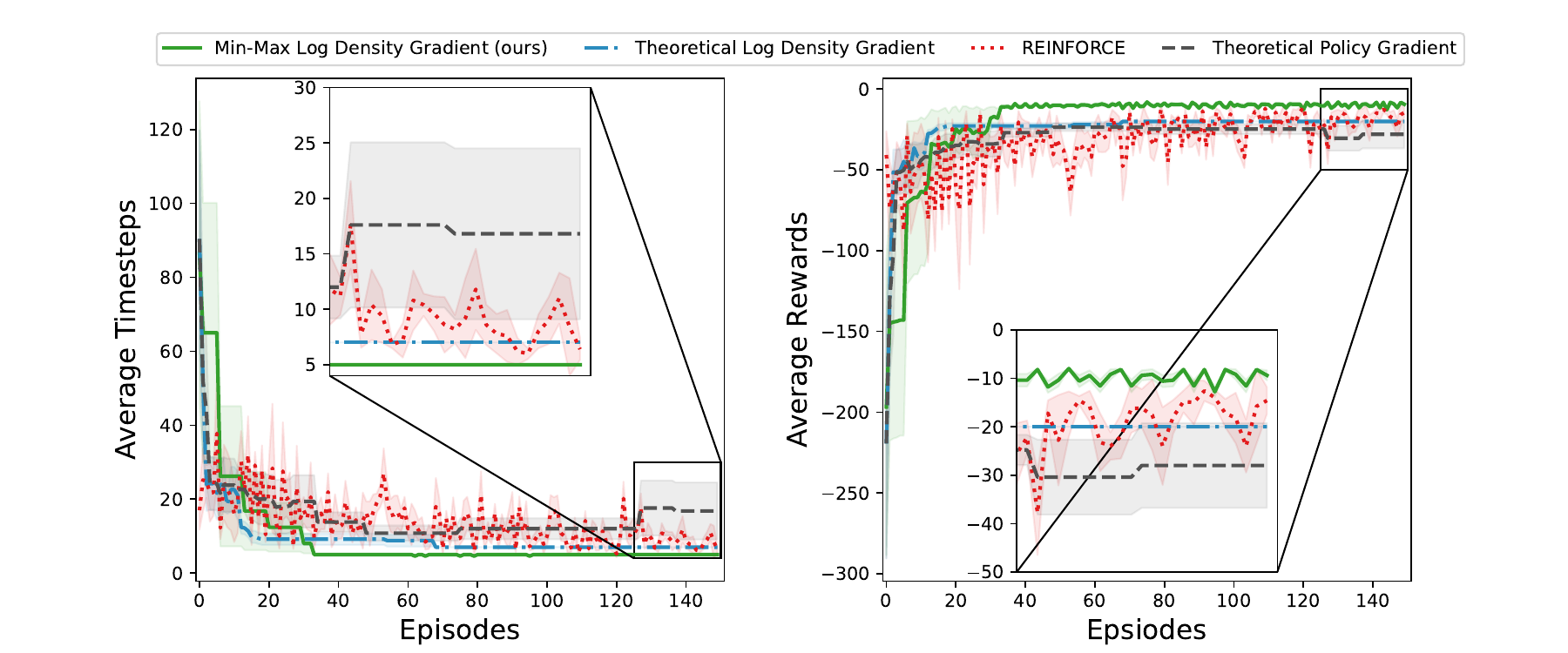}
\caption{For $3 \times 3$ gridworld, comparison of Log Density Gradient algorithms (in light green) as compared to REINFORCE (light red), theoretical policy gradient (gray) and theoretical log density gradient (blue). 
We observe that our empirical algorithm comfortably outperforms the other baselines.
}
\label{fig:gridworld3}
\end{figure}
\section{Conclusion and Future Work}
We present log density gradient algorithm that estimates policy gradient using state-action discounted formulation of a reinforcement learning problem. 
We observe that policy gradient estimated in this manner, corrects for a residual error common in many reinforcement learning tasks. 
We show that with a known model, we can exactly calculate the gradient of the log density by solving two sets of linear equations. We further propose a TD(0) algorithm to implement the same, but it needs samples from the backward Markov chain, which becomes too restrictive.
Therefore, we propose a min-max optimization that estimates log density gradient using just on-policy samples. 
We not only prove theoretical properties like convergence and uniqueness but also experimentally demonstrate that our method is sample efficient as compared to classical policy gradient methods like REINFORCE. 
This approach looks promising, and further studies of log density gradient will focus on scaling their performance to complex tasks.

\pagebreak 
\bibliography{uai2023template}
\bibliographystyle{neurips23}
\pagebreak 
\onecolumn
\section{Appendix}
\subsection{Additional Literature Review - Temporal Difference Methods}\label{sec:td_lit}
Temporal difference (TD) learning involves improving predictions through bootstrapping the current estimation. Early TD methods were mainly used to estimate the value function $V^{\pi_\theta}$, which used a semi-gradient update rule to improve value function prediction (Sutton, 1988). 
More advanced TD methods incorporate prior traces (TD($\lambda$)) in a discounted manner \citep{DBLP:journals/ml/Tesauro92}.  Since many reinforcement learning methods are not tabular, Linear TD methods were also proposed that uses this methodology to learn parameters that approximate the value function ~\citep{DBLP:journals/ml/BradtkeB96, boyan1999least}. Recently, TD methods have been effectively used to approximate Advantage function \citep{DBLP:conf/icml/MnihBMGLHSK16}, density ratio function \citep{DBLP:conf/aaai/GeladaB19, DBLP:conf/icml/HallakM17, DBLP:conf/aaai/GeladaB19} and off-policy value estimation \citep{gradientTD}.In this work, we will observe that the log density gradient has a recursive form that cannot be approximated using a closed form solution. We will then propose a TD(0) methods to estimate the log density gradient using on-policy samples.
\subsection{Proof of Proposition \ref{prop:consistency}}\label{sec:pg_consistency}
\begin{proof} 
We begin with policy gradient calculated using log density gradient, \eqref{eq:secondform})
\begin{align*}
    \nabla_\theta J_\gamma(\pi_\theta) = \mathbb{E}_{(s, a) \sim d_\gamma^{\pi_\theta}}[\nabla_\theta \log d_\gamma^{\pi_\theta}(s, a)r(s, a)]
\end{align*}
We recall from Bellman equation, \ref{eq:bellman} that $r(s, a) = Q^{\pi_\theta}_\gamma - \gamma \mathbb{E}_{s' \sim \P(\cdot | s, a), a' \sim \pi_\theta(\cdot | s')}[Q^{\pi_\theta}_\gamma(s', a')]$, hence
\begin{align*}
    \nabla_\theta J_\gamma(\pi_\theta) = \EE_{(s, a) \sim d_\gamma^{\pi_\theta}}[\nabla_\theta \log d_\gamma^{\pi_\theta}(s, a)(Q_\gamma^{\pi_\theta}(s, a) - \gamma \mathbb{E}_{s' \sim \mathcal{P}(\cdot | s, a) \pi_\theta(a' | s')}[Q_\gamma^{\pi_\theta}(s', a')])
\end{align*}
We recover the policy gradient theorem in \eqref{eq:policy_gradient} by first multiplying \eqref{eq:ldg_condition} by $Q^{\pi_\theta}_\gamma(s', a')$
\begin{align*}
    & d_\gamma^{\pi_\theta}(s', a') (\nabla_\theta \log d_\gamma^{\pi_\theta}(s', a') - \nabla_\theta \log \pi(a' | s'))Q^{\pi_\theta}_\gamma(s', a')  \\
    &= \gamma \sum_{s, a}d_\gamma^{\pi_\theta}(s, a) \nabla_\theta \log d_\gamma^{\pi_\theta}(s, a) \mathcal{P}(s' | s, a) \pi_\theta(a' | s') Q^{\pi_\theta}_\gamma(s', a')
\end{align*}
and then re-arranging terms with $\nabla_\theta \log d^{\pi}_\gamma$ on the left hand side and the remaining terms on the right hand side. We can then sum these terms to get \eqref{eq:policy_gradient}
\begin{align*}
    \nabla_\theta J_\gamma (\pi_\theta) &= \EE_{(s, a) \sim d_\gamma^{\pi_\theta}}[\nabla_\theta \log d_\gamma^{\pi_\theta}(s, a)(Q_\gamma^{\pi_\theta}(s, a) - \gamma \mathbb{E}_{s' \sim \mathcal{P}(\cdot | s, a) \pi_\theta(a' | s')}[Q_\gamma^{\pi_\theta}(s', a')]) \\ 
&= \mathbb{E}_{(s, a) \sim d^{\pi_\theta}_\gamma}[Q^{\pi_\theta}_\gamma(s, a) \nabla_\theta \log \pi ( a | s)]
\end{align*}
This completes the proof. 
\end{proof}
\subsection{Proof of Lemma \ref{le:bellman_conservation}}\label{proof:bellman_conservation}
\begin{proof}
Recall the definition of $d_\gamma^{\pi_\theta}$ from  \eqref{eq:discounted_form} 
\begin{align*}
    &d_\gamma^{\pi_\theta}(s) = (1-\gamma)\sum_{t=0}^{\infty} \gamma^t \Prob(s_t = s | s_0 \sim d_0, a_t \sim \pi_\theta(s_t), s_{t+1} \sim \mathcal{P}(s_t, a_t))
\end{align*}
We see that for a given state $s' \in \S$ the occupancy of an agent at time $t$ and $t+1$ are related as follows, 
\begin{align*}
    \Prob(s_{t+1} = s') = \sum_{s \in \S, a \in \A} \Prob(s_t = s)\cdot \pi_\theta(a | s) \cdot \P(s' | s, a)
\end{align*}
If we multiply both sides by $\gamma^{t+1}$ and sum them up from $t=0$ to $\infty$. We get the following form, 
\begin{align*}
    \sum_{t=0}^{\infty} \gamma^{t+1} \Prob(s_{t+1} = s')  &= \gamma \sum_{t=0}^{\infty} \gamma^t \sum_{s \in \S, a \in \A} \Prob(s_t = s)\pi_\theta(a | s) \P(s' | s, a) \\ 
     &= \gamma \sum_{s \in \S, a \in \A} \sum_{t=0}^{\infty} \gamma^t \Prob(s_t = s)\pi_\theta(a | s) \P(s' | s, a) \\
     &  = \frac{\gamma}{1-\gamma} \sum_{s \in \S, a \in \A} d_\gamma^{\pi_\theta}(s) \pi_\theta(a | s) \P(s' | s, a) \\
\end{align*}
We go from the first equation to the second by exchanging the summation signs and from the second to the third  by using the definition of $\sum_{t=0}^{\infty} \gamma^t \Prob(s_t = s) = d_\gamma^{\pi_\theta}$. 
We add $\Prob(s_0 = s')$ on both sides, which is nothing but the set of initial states $d_0(s')$, to get
\begin{align*}
    &\sum_{t=0}^{\infty} \gamma^{t} \Prob(s_{t} = s')  =  d_0(s') + \frac{\gamma}{1-\gamma} \sum_{s \in \S, a \in \A} d_\gamma^{\pi_\theta}(s) \pi_\theta(a | s) \P(s' | s, a) \\
    & \implies \frac{1}{1-\gamma}d_\gamma^{\pi_\theta}(s')  =  d_0(s') + \frac{\gamma}{1-\gamma} \sum_{s \in \S, a \in \A} d_\gamma^{\pi_\theta}(s) \pi_\theta(a | s) \P(s' | s, a) \\
    & \implies d_\gamma^{\pi_\theta}(s')  =  (1-\gamma)d_0(s') + \gamma\sum_{s \in \S, a \in \A} d_\gamma^{\pi_\theta}(s) \pi_\theta(a | s) \P(s' | s, a) \\
\end{align*}
where we used 
\begin{align*}
    \sum_{t=0}^{\infty} \gamma^{t+1} \Prob(s_{t+1} = s') + d_0(s') = \sum_{t=0}^{\infty} \gamma^{t} \Prob(s_{t} = s') = \frac{1}{1-\gamma}d_\gamma^{\pi_\theta}(s').
\end{align*}
This completes the proof. 
\end{proof}
\subsection{Proof of Lemma \ref{le:bellman_conservation_uniqueness}}\label{proof:bellman_conservation_uniqueness}
\begin{proof}
Recall the optimization \ref{eq:bellman_unique_1} 
\begin{align*}
    \argmin_{w: \S \rightarrow \mathbb{R}}\sum_{s'} \Large( w(s') - (1-\gamma)d_0(s') + \gamma \sum_{s, a} w(s) \pi_\theta(a | s) \P(s' | s, a)\Large)^2 + \frac{\lambda}{2} (\sum_{s} w(s) - 1)^2 
\end{align*}
It is worth noting that the optimization has two quadratic terms. Thus, the lowest value that they can take is only when $w(s') - (1-\gamma)d_0(s') + \gamma \sum_{s, a} w(s) \pi_\theta(a | s) \P(s' | s, a) = 0$ and $\sum_{s} w(s) = 1$. For the next part of the proof, we will handle them case-by-case. 

\textbf{Case 1. $\gamma < 1$} 

Recall that the first term of the optimization \eqref{eq:bellman_unique_1} goes to zero only when 
\eqref{eq:gradient_identity} (reproduced below) is satisfied 
\begin{align*}
    w(s') = (1-\gamma)d_0(s') + \gamma \sum_{s \in \S, a \in \A} w(s)\pi_\theta(a | s)\P(s' | s, a)  \quad \text{ for all } s' \in \S.
\end{align*}
Now all that remains to be proven is the uniqueness aspect of this form. To that end, we first write the above set of equations in a matrix form 
\begin{align}
    W = (1-\gamma)D_0 + \gamma  \P_{\pi_\theta}^T W,
\end{align}
where $W = (w(s_1), w(s_2) ,....., w({s_{|\S|}})) \in \R^{|\S|}$, 
$\P_{\pi_\theta}, \D \in \mathbb{R}^{|\S|  \times |\S| }$ where $ (\P_{\pi_\theta})_{(s,s')} = \Prob^{\pi_{\theta}}(s'|s)$ and $\D$ is a diagonal matrix whose every element correspond to $d_0$.
It is easy to see from this form here that the row sum of this matrix is 1. We can simply re-write the above equation as follows, 
\begin{align*}
    (I_{| \S |} - \gamma \P_{\pi_\theta}^T)W = (1-\gamma)D_0
\end{align*}
$I_{|\S|}$ is the identity matrix of size $|\S|$. It now remains to prove that $(I_{|\S| } - \gamma \P_{\pi_\theta}^T) $ is invertible. We propose the following Lemma which proves the same result. 
\begin{lemma}\label{le:p_unique}
    For $\gamma < 1$, The matrix $(I_{|\S| } - \gamma \P_{\pi_\theta}^T)$ is invertible. 
\end{lemma}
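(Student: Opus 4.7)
The plan is to reduce the invertibility of $I_{|\S|} - \gamma \P_{\pi_\theta}^T$ to a standard fact about stochastic matrices. First I would observe that $\P_{\pi_\theta}$, whose $(s,s')$ entry is $\sum_a \pi_\theta(a\mid s)\P(s'\mid s,a)$, is row-stochastic: its entries are nonnegative and each row sums to $1$. Transposing exchanges rows and columns, so $\P_{\pi_\theta}^T$ is column-stochastic, and its operator norm induced by $\ell_1$ (which equals the maximum absolute column sum) is exactly $1$.

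From here the conclusion is immediate by a Neumann series argument. Since $\|\gamma\P_{\pi_\theta}^T\|_1 = \gamma < 1$, the partial sums $\sum_{k=0}^{N} \gamma^k (\P_{\pi_\theta}^T)^k$ form a Cauchy sequence in the operator norm and converge to some matrix $M$; then telescoping $(I_{|\S|} - \gamma \P_{\pi_\theta}^T) \sum_{k=0}^{N} \gamma^k (\P_{\pi_\theta}^T)^k = I_{|\S|} - \gamma^{N+1}(\P_{\pi_\theta}^T)^{N+1}$ and letting $N \to \infty$ shows $M$ is a two-sided inverse of $I_{|\S|} - \gamma\P_{\pi_\theta}^T$. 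This gives invertibility, which is exactly what we need.

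An equally short alternative would route through the spectral radius: Perron--Frobenius gives $\rho(\P_{\pi_\theta}) = 1$, spectrum is preserved under transposition, so $\rho(\gamma\P_{\pi_\theta}^T) = \gamma < 1$; therefore $1$ is not an eigenvalue of $\gamma \P_{\pi_\theta}^T$ and the matrix $I_{|\S|} - \gamma\P_{\pi_\theta}^T$ is nonsingular. There is no substantive obstacle; the only care needed is to use the $\ell_1$ operator norm (which is bounded by $1$ for column-stochastic matrices) rather than the $\ell_\infty$ norm, since it is the transpose, not $\P_{\pi_\theta}$ itself, that appears in the identity.
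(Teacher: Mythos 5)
Your proof is correct, and it reaches the conclusion by a route that differs in its details from the paper's. The paper works with the \emph{untransposed} matrix: for any nonzero $x$ it bounds $\|(I_{|\S|} - \gamma \P_{\pi_\theta})x\|_\infty \geq (1-\gamma)\|x\|_\infty > 0$ via the reverse triangle inequality and the fact that a row-stochastic matrix has $\ell_\infty$ operator norm $1$; this shows $I_{|\S|} - \gamma\P_{\pi_\theta}$ is injective, hence (in finite dimensions) invertible, and invertibility passes to the transpose. You instead attack $\P_{\pi_\theta}^T$ directly, noting it is column-stochastic so its $\ell_1$ operator norm is $1$, and then run a Neumann series to exhibit $\sum_{k\geq 0}\gamma^k(\P_{\pi_\theta}^T)^k$ as an explicit two-sided inverse. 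Both arguments rest on the same underlying fact --- a stochastic matrix has operator norm $1$ in the appropriate norm, so $\gamma\P_{\pi_\theta}$ is a strict contraction --- but yours buys an explicit formula for the inverse (which is in fact the resolvent that reappears in the paper's Lemma~\ref{le:fixed_point}, where $\sum_t \gamma^t(\P_{\pi_\theta}^T)^t$ shows up as the limit of the TD iteration), while the paper's injectivity argument is marginally shorter and avoids any convergence discussion. Your closing caution about choosing the $\ell_1$ rather than $\ell_\infty$ norm for the transpose is well taken, though the paper sidesteps the issue entirely by never norm-bounding the transposed matrix. Your spectral-radius alternative is also valid, though invoking Perron--Frobenius is heavier machinery than needed here.
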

\begin{proof}
It is generally easier to prove that the transpose of this matrix is invertible. Consider $x \in \mathbb{R}^{|\S|}$ is a non-zero vector. We will now prove that $(I_{|\S| } - \gamma \P_{\pi_\theta}) x $ is non-zero. To see that observe the infinty norm as follows, 
\begin{align*}
    \|(I_{|\S| } - \gamma \P_{\pi_\theta}) x\|_\infty &\geq \| x\|_\infty - \gamma \|\P^{\pi_\theta}x\|_\infty\\
    & \geq (1-\gamma) \| x\|_\infty > 0
\end{align*}
We get the first equation from the triangular inequality. We get the second equation from the fact that the row sum of the matrix $\P_{\pi_\theta}$ is 1. Thus, $\|\P_{\pi_\theta} \cdot x\|_\infty \leq \|x\|_\infty$. This implies that $(I_{|\S|} - \gamma \P_{\pi_\theta}^T)$ is invertible. This completes the proof. 
\end{proof}

Since the matrix $(I_{|\S|} - \gamma \P_{\pi_\theta}^T)$ is invertible 
the solution is equal to $d^{\pi_\theta}_\gamma$. The solution also satisfies the constraint $\sum_s w(s) = \sum_s d^{\pi_\theta}(s) = 1$. This completes the proof for the first case. 

\textbf{Case 2. $\gamma = 1$} \\
For $\gamma = 1$ we need to solve 
\begin{align*}
   (I - \P_{\pi_\theta}^T) W = 0
\end{align*}
The proof now follows from ~\cite[Theorem 1]{DBLP:journals/corr/abs-2001-11113}. Here we invoke Perron-Frobenius theorem which says the dimension of the left eigenspace of$\P_{\pi_\theta}$ corresponding to eigevalue 1 is one-dimensional. Since $d^{\pi_\theta}_1$ belongs in that set,the solution of these set of equations as $W = \alpha d^{\pi_\theta}_1$. 
%
It is worth noting that only $\alpha = 1$ satisfies the second constraint which is $\sum_{s} w(s) = 1$. This completes the proof. 
\end{proof}
\subsection{Proof of Lemma \ref{le:uniquenss}}\label{proof:uniquenss}
This proof is very similar to proof of Lemma \ref{le:bellman_conservation_uniqueness}. 
\begin{proof}
Let us restate the optimization problem in Lemma \ref{le:uniquenss} (\eqref{eq: opt_ldg}):
\begin{align*}
    &\min_{w: \mathcal{S} \times \mathcal{A} \rightarrow \mathbb{R}^{n}} 
\bigg\{ \EE_{(s', a') \sim d_\gamma^{\pi_\theta}} \| \nu(s',a') \| ^2   + \frac{\lambda}{2} \Big\| E_{(s, a) \sim d^{\pi_\theta}_\gamma}[w(s', a')] \Big\|^2 \bigg\} \\
    &\nu(s',a') := d_\gamma^{\pi_\theta}(s', a') (w(s', a') - \nabla_\theta \log \pi_\theta(a' |s')) - \gamma \sum_{s, a}d_\gamma^{\pi_\theta}(s, a) \mathcal{P}(s' | s, a) \pi_\theta(a'|s') w(s, a) 
\end{align*}
We note that it has two quadratic terms.
As in Lemma \ref{le:bellman_conservation_uniqueness}, this loss function can only go to zero if and only if both the quadratic terms turn out to be zero. This implies that 
\begin{align*}
    &\nu(s', a') = 0 \, \text{ for all } (s', a') \in \S \times \A \quad \text{ and } \quad \mathbb{E}_{(s', a') \sim d^{\pi_\theta}_\gamma}[w(s', a')] = 0.
\end{align*}
We take two cases. 

\textbf{Case 1. $\gamma < 1$} 

Similar to Lemma \ref{le:bellman_conservation_uniqueness}, for a finite state-action space we can re-write equation \ref{eq: opt_ldg} in a linear form
\begin{align}\label{eq:ldg_matrix_form}
    (I_{|\S| \times | \A | } - \gamma \P_{\pi_\theta}^T) \, D_{\pi_\theta} \, W = D_{\pi_\theta} \, G 
\end{align}
where, $W \in \mathbb{R}^{|\S| \cdot | \A | \times n}$ is the matrix with every row corresponding to $w(s,a)$ for each state-action pair $(s,a)$. Similarly, $G \in \mathbb{R}^{|\S| \cdot | \A | \times n}$ has its rows as $\nabla_\theta \log \pi_\theta$ for each state-action pair. Let $\P_{\pi_\theta}, \D \in \mathbb{R}^{|\S| \cdot | \A | \times |\S| \cdot | \A |}$ where $ (\P_{\pi_\theta})_{((s,a),(s',a'))} = \Prob^{\pi_{\theta}}(s',a'|s,a)$ and $\D$ is a diagonal matrix whose every element correspond to $d_\gamma^{\pi_\theta}$ for each  state-action pair.
The ergodicity assumption implies that $D_{\pi_\theta}$ is invertible. 
Additionally, we also proved in Lemma \ref{le:p_unique} in Section \ref{proof:bellman_conservation_uniqueness} that $(I - \gamma \P_{\pi_\theta}^T)$ is invertible. 
This ensures that the solution of equation \ref{eq:ldg_matrix_form} is unique and equal to the log density gradient. Since, log density gradient always satisfies the constraint $\mathbb{E}_{(s, a) \sim d^{\pi_\theta}_\gamma}[\nabla_\theta \log d^{\pi_\theta}(s, a)] = 0$, the second constraint becomes redundant. This completes the proof.

\textbf{Case 2. $\gamma = 1$}

For $\gamma = 1$ we have the following set of equations 
\begin{align*}
    (I_{|\S| \times | \A | } - \P_{\pi_\theta}^T) \, D_{\pi_\theta} \, W = D_{\pi_\theta} \, G  \quad \text{ and } \quad e^T D_{\pi_\theta} W = 0.
\end{align*}
Let us look at the equations column by column.
The matrix $ (I_{|\S| \times | \A | } - \P_{\pi_\theta}^T) $ is not invertible since the vector $d_{1}^{\pi_{\theta}}$ is in its nullspace. 
Since $\nabla_\theta \log d^{\pi_\theta}_1$ satisfies the first equation, every column can be written as $\nabla_\theta \log d^{\pi_\theta}_\gamma + v $ where $v$ is any vector in the span of $\{d_{1}^{\pi_{\theta}}\}$.  
This implies that $D_{\pi_\theta} v = \P_{\pi_\theta}^T D_{\pi_\theta} v$. invoking Perron-Frobenius theorem as in Lemma \ref{proof:bellman_conservation_uniqueness}, we see that $v$ lies in the span of $\{d_1^{\pi_{\theta}}\}$. 
We complete the proof by trying to satisfy the second constraint $e^T D_{\pi_\theta} v = 0$ which gives us $v = 0$. This completes the proof. 
\end{proof}

\subsection{Proof of Lemma \ref{le:fixed_point}}\label{sec:fixed_point}

\begin{proof}
We will prove contraction first for $\gamma \in [0,1)$. 

\textbf{Proof of Contraction:} 
To prove contraction, we will show that given any arbitrary functions $U, V: \S \times \A \rightarrow \mathbb{R}^{n}$, their difference under the L1 norm of the distribution $d^{\pi_\theta}_\gamma$ is a contraction. 
    \begin{align*}   
                 &\sum_{s', a'} d^{\pi_\theta}_\gamma(s' ,a') | Y_{\gamma} \cdot U(s', a') - Y_{\gamma} \cdot V(s', a')|  \\
                 &= \sum_{s', a'} d^{\pi_\theta}_\gamma(s' ,a')|\gamma \sum_{s, a} d^{\pi_\theta}_\gamma(s, a) (U(s, a) - V(s, a) )\mathcal{P}(s' | s, a) \pi_\theta(a' | s')| \\
                 & \leq \gamma \sum_{s', a'} d^{\pi_\theta}_\gamma(s' ,a')\sum_{s, a}d^{\pi_\theta}_\gamma(s, a) | U(s, a) - V(s, a)|\mathcal{P}(s' | s, a)\pi_\theta (a' | s') \\
                 & \leq \gamma \sum_{s, a} d^{\pi_\theta}_\gamma(s, a)\sum_{s', a'} d^{\pi_\theta}_\gamma(s' ,a')\ | U(s, a) - V(s, a) | \mathcal{P}(s' | s, a) \pi_\theta (a' | s') \\
                 & \leq \gamma \sum_{s, a} d^{\pi_\theta}_\gamma(s, a)\sum_{s', a'} \ | U(s, a) - V(s, a) | \mathcal{P}(s' | s, a) \pi_\theta (a' | s') \\
                 &\leq \gamma \sum_{s, a} d^{\pi_\theta}_\gamma(s, a) | U(s, a) - V(s, a)| = \gamma \|U-V\|_{1}^{d^{\pi_\theta}_\gamma}.                 
    \end{align*}
This completes the proof. The contraction property is useful in proving uniqueness and convergence which we prove next. 

\textbf{Case 1. $\gamma < 1$} 

Let $Y_\gamma^k$ denote that $Y_\gamma $ has been composed with itself $k$ times. We are interested in what happens when $\lim_{k \rightarrow \infty} Y_\gamma^k$. 
\begin{align*}
    Y_{\gamma}^k C_0 = \gamma^k D_{\pi_\theta}^{-1} (\P_{\pi_\theta}^T)^k D_{\pi_\theta}C_0 + \sum_{t=0}^{k-1} \gamma^{t} D_{\pi_\theta}^{-1} (\P_{\pi_\theta}^T)^t D_{\pi_\theta} G
\end{align*}
Noting that $\P_{\pi_\theta}$ is a transition probability matrix so every element is bounded above by 1, and since $\gamma < 1$, we get that $\lim_{k \rightarrow \infty} \gamma^k D_{\pi_\theta} (\P_{\pi_\theta}^T)^k D_{\pi_\theta}C_0 = 0$. Focusing on the other terms as $k \rightarrow \infty$ we get the following, 
\begin{align*}
    \lim_{k \rightarrow \infty} &\sum_{t=0}^{k-1} \gamma^{t} D_{\pi_\theta}^{-1} (\P_{\pi_\theta}^T)^t D_{\pi_\theta} G = D_{\pi_\theta}^{-1}(I - \gamma \P_{\pi_\theta}^T)^{-1}D_{\pi_\theta} G
\end{align*}
From \eqref{eq:ldg_matrix_form} we see that 
$\lim_{k \to \infty}Y_{\gamma}^k C_0 = \nabla_\theta \log d_\gamma^{\pi_\theta}$ which completes the proof. 
\end{proof}
\subsection{Additional Details Linear-TD}\label{sec:linear_td_details}
\todo[inline]{how do you know the limit exists? it follows from Lemma above right?}
\todo[inline]{combine lemma 4 and 5}
For large problem spaces, it is difficult to learn $w(s', a')$ for all the state-action spaces. In that case, we use linear function approximation, see for example, ~\cite{DBLP:conf/aaai/GeladaB19, DBLP:conf/icml/HallakM17}.
We choose a feature map $\Phi: \mathcal{S} \times \mathcal{A} \to \mathbb{R}^{d}$  and $\zeta \in \mathbb{R}^{d \times n}$ are the linear parameters that we wish to learn. We want to approximate $w(s,a)$ as $\zeta^T \Phi(s, a)$ for each state action pair $(s,a)$. 
\todo[inline]{where is $\Psi$? }
%
The TD(0) algorithm with linear function approximation has the following update
\begin{align}\label{eq:linear_td}
    \zeta_{k+1}^T \leftarrow \zeta_{k}^T + \alpha_{k}(\gamma \zeta_k^T \Phi(s_k, a_k)  + g(s_k', a_k') - \zeta_k^T \Phi(s_k', a_k'))\Phi(s_k', a_k')^T,
\end{align}
where $(s_k, a_k) \sim d_\gamma^{\pi_\theta}, s_k' \sim \mathcal{P}(\cdot | s_k, a_k), a_k' \sim \pi_\theta(\cdot | s_k')$ and $\alpha_k$ is the learning rate. 
To prove convergence, re-write \eqref{eq:linear_td} in the following linear form,
\begin{align*}
    &\zeta_{k+1} = \zeta_k + \alpha_{k}(A_{k+1} \zeta_k + g_{k+1}) 
\end{align*}
where, 
$A_{k} := \gamma \Phi(s_k', a_k')(\Phi(s_k, a_k)^T - \Phi(s_k', a_k')^T) , g_{k} := \Phi(s_k', a_k') (\nabla_\theta \log \pi_\theta (a_k' | s_k'))^T$. 
\todo[inline]{new variable for $g$, and write the grad log pi term as g to avoid confusion}
We further define, 
\begin{align*}
    A &:= \gamma \mathbb{E}[\Phi(s', a')( \Phi(s, a)^T - \Phi(s', a')^T) \, | \, (s, a) \sim d_\gamma^{\pi_\theta}, s' \sim \mathcal{P}(\cdot | s, a), a' \sim \pi_\theta(\cdot | s')], \\
    g &:= \mathbb{E}[\Phi(s', a') (\nabla_\theta \log \pi_\theta (a' | s'))^T \, | \, (s, a) \sim d_\gamma^{\pi_\theta}, s' \sim \mathcal{P}(\cdot | s, a), a' \sim \pi_\theta(\cdot | s')]
\end{align*}
To prove the convergence of this linear TD update rule, we make a few standard assumptions. 
\begin{assumption}\label{ass:1}
   \begin{enumerate}
       \item The matrix $\Psi$ has linearly independent rows. 
       \item The matrix $A$ is non-singular.
       \item The feature matrix $\Phi$ has uniformly bounded second moments.
   \end{enumerate}
\end{assumption}
\begin{theorem}\label{le:convergence_1}
    Under Assumption \ref{ass:1} and the fact that the learning rate satisfies the Robbins Monroe condition \cite{10.1214/aoms/1177729586} then the
    TD update equation \ref{eq:linear_td} converges in probability to the solution
    \begin{align*}
        \lim_{k \rightarrow \infty} \zeta_k = -A^{-1} b 
    \end{align*}
\end{theorem}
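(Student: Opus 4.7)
The plan is to recognize the recursion
\[
\zeta_{k+1} = \zeta_k + \alpha_k(A_{k+1}\zeta_k + g_{k+1})
\]
as a linear stochastic approximation scheme with martingale-difference noise, and apply a standard stochastic approximation theorem (Tsitsiklis--Van Roy style, or Borkar's ODE method). First I would note that the update for each column of $\zeta$ is decoupled, so it suffices to analyze a single vector-valued recursion. Since $(s_k,a_k,s_k',a_k')$ is generated by an ergodic on-policy process, $A_{k+1}$ and $g_{k+1}$ are (asymptotically) unbiased samples of $A$ and $g$, and Assumption \ref{ass:1}(3) (uniformly bounded second moments of $\Phi$) guarantees $\mathbb{E}\|A_{k+1}\|^2<\infty$ and $\mathbb{E}\|g_{k+1}\|^2<\infty$, which is the standard noise-control hypothesis.

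Next I would identify the mean ODE associated to the recursion, namely $\dot{\zeta} = A\zeta + g$. By Assumption \ref{ass:1}(2), $A$ is invertible, so the ODE has the unique equilibrium $\zeta^\star = -A^{-1}g$. For the SA theorem to deliver convergence to $\zeta^\star$, the ODE must be globally asymptotically stable at $\zeta^\star$, equivalently $A$ must be Hurwitz. This is the main obstacle.

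To establish the Hurwitz property I would translate $A$ into the matrix form already used in Lemma \ref{le:fixed_point}. Writing
\[
A = \Psi D_{\pi_\theta}(\gamma P_{\pi_\theta}^\top - I)\Psi^\top
\]
(up to sign conventions matching the operator $Y_\gamma$), the contraction argument of Lemma \ref{le:fixed_point} shows that $\gamma P_{\pi_\theta}^\top - I$ is a stable matrix in the $D_{\pi_\theta}$-weighted inner product for $\gamma\in[0,1)$: indeed, the contraction constant $\gamma$ in the weighted $\ell_1$ norm yields, by a standard symmetrization, negative-definiteness of the symmetric part of $D_{\pi_\theta}(\gamma P_{\pi_\theta}^\top - I)$ in a suitable weighted quadratic form. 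Combining this with Assumption \ref{ass:1}(1) (linear independence of the columns of $\Psi$, so $\Psi$ has full row rank after restriction) shows that $A$ inherits Hurwitz-ness, i.e., all eigenvalues of $A$ have strictly negative real parts.

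With $A$ Hurwitz, the ODE $\dot\zeta = A\zeta + g$ is globally exponentially stable at $\zeta^\star = -A^{-1}g$; combined with the Robbins--Monro step-size hypothesis and the bounded-moment noise condition, the standard linear SA convergence theorem (e.g., \cite{10.1137/S0363012997331639}) yields $\zeta_k \to -A^{-1}g$ in probability. The hardest part, as noted, is converting the weighted $\ell_1$ contraction of Lemma \ref{le:fixed_point} into a quadratic-form (eigenvalue) statement about $A$; a clean way is to work directly with the symmetric part of $A$ in the $D_{\pi_\theta}$-weighted inner product and use the Cauchy--Schwarz step that already appears in the contraction proof.
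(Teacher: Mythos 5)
Your overall architecture is exactly the paper's: cast the recursion as linear stochastic approximation $\zeta_{k+1}=\zeta_k+\alpha_k(A\zeta_k+g+M_{k+1})$ with a martingale-difference perturbation, invoke the Borkar--Meyn theorem under the Robbins--Monro step sizes and the bounded-second-moment condition, and reduce everything to showing that $A$ is Hurwitz. The one place where your argument does not go through as stated is the claim that the weighted $\ell_1$ contraction of Lemma \ref{le:fixed_point} yields, ``by a standard symmetrization,'' negative definiteness of the symmetric part of $D_{\pi_\theta}(\gamma P_{\pi_\theta}^\top - I)$: contractivity in a weighted $\ell_1$ norm is a statement about row/column sums and does not transfer to quadratic forms (a matrix can be an $\ell_1$ contraction while its symmetric part is indefinite), so this step is a genuine gap if taken as the main route. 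What saves you is the alternative you mention at the end, which is precisely the paper's proof: bound $x^\top A x$ directly, writing the cross term as $\gamma\,\mathbb{E}[(x^\top\Phi(s,a))\,\mathbb{E}[x^\top\Phi(s',a')]]$ and applying Cauchy--Schwarz (equivalently, nonexpansiveness of $P_{\pi_\theta}$ in the $D_{\pi_\theta}$-weighted $\ell_2$ norm) to get $x^\top A x\le -(1-\gamma)\,\mathbb{E}[(x^\top\Phi(s,a))^2]\le 0$, and then use the nonsingularity of $A$ from Assumption \ref{ass:1} to exclude a zero eigenvalue. If you promote that Cauchy--Schwarz computation from an afterthought to the actual argument, your proof coincides with the paper's.
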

\todo[inline]{1. "the fact that learning rate..."? has it been assumed before? maybe we could add it as another assumption after the three previous ones. 2. what is $b$\\
\textbf{Answer} Makes sense.. Can do this}
\begin{proof}
The proof follows similar to \cite[Theorem 2]{DBLP:journals/corr/abs-2001-11113} which invokes ~\cite[Theorem 2.2]{10.1137/S0363012997331639}. We first re-write the updated equation in the following form
\begin{align*}
    \zeta_{k+1} = \zeta_{k} + \alpha_{k}(A \zeta_k + g + (A_{k+1}-A)\zeta_k + (g_{k+1} - g)).
\end{align*}
The proof follows almost equivalently from \cite[Theorem 2]{DBLP:journals/corr/abs-2001-11113} since the assumptions are the same. 
The above equation has separated the update into the deterministic part, $h(\zeta_k) :=A \zeta_k + g$ and the Martingale part, $M_{k+1} := (A_{k}-A)\zeta_k + (g_k - g)$. To apply this theorem, we now need to show that the function $h(\zeta)$ is asymptomatically stable. For stability, we will now show that the matrix $A$ has all negative Eigenvalues. Consider any unit eigenvector $x$. We will now try to evaluate $x^T A x$. 
\begin{align*}
    x^T A x&= x^T \, \EE_{(s, a) \sim d_\gamma^{\pi_\theta}}[-\Phi(s, a) \Phi(s, a)^T + \gamma \Phi(s, a) \EE_{s' \sim \P(\cdot | s, a), a' \sim \pi_\theta(\cdot | s')}[\Phi(s', a')]] \, x  \\
    &= -\EE_{(s, a) \sim d_\gamma^{\pi_\theta}}[x^T\Phi(s, a) \Phi(s, a)^T x] + \gamma [x^T \Phi(s, a) \EE_{s' \sim \P(\cdot | s, a), a' \sim \pi_\theta(\cdot | s')}[\Phi(s', a')]]
\end{align*}
Using Cauchy-Shwartz inequality on the second term. We now get the following, 
\begin{align*}
    x^T A x  \leq -(1-\gamma)\EE_{(s, a) \sim d_\gamma^{\pi_\theta}}[x^T\Phi(s, a) \Phi(s, a)^T x] \leq 0
\end{align*}
Since $A$ is non-singular (Assumption \ref{ass:1}), the eigenvalues cannot be zero. Thus, the eigenvalues for this problem are strictly negative. This guarantees asymptotic stability of $h(\zeta)$ and completes the proof. 
\end{proof}
\subsection{Proof of Theorem \ref{le:convergence_minmax}}\label{proof:convergence_minmax}
\begin{proof}
The proof follows similar to to \cite[Theorem 2]{DBLP:journals/corr/abs-2001-11113} which invokes ~\cite[Theorem 2.2]{10.1137/S0363012997331639}. 
We first re-write the updates in equation \ref{eq:update_equation} in matrix form as $d_{t+1} = d_{t} + \varepsilon_t (G_{t+1} d_t + h_{t+1})$ with $d_t := [\alpha_t, \beta_t, \tau_t^T ]$ and $G_{t+1}, h_{t+1}$ are as follows, 
\begin{align*}
    G_{t+1} &= \begin{bmatrix}
    0 & -A_t & -\lambda \Phi_t  \\ A_t & -C_t & 0 \\
    \lambda \Phi_t^T & 0 & -\lambda 
    \end{bmatrix}, \quad h_{t+1} = \begin{bmatrix}
    0 \\
    -B_t \\
    0
    \end{bmatrix}
\end{align*}
where, $A_t = (\Phi_t \Phi_t^T - \gamma \Phi_t (\Phi_t')^T), B_t = \Phi g_t^T, C_t = \Phi_t \Phi_t^T$. 
We can calculate the expectation for each of these matrices as follows (the expectation is taken over all possible values of $(s,a)$ distributed as $d_{\gamma}^{\pi_{\theta}}$), 
\begin{align*}
    G = \mathbb{E}_{p}[G_{t+1}] = \begin{bmatrix}
        0 & -A & -\lambda \D^T \Phi \\
        A & C & 0 \\
        \lambda \Phi^T \D & 0 & -\lambda
    \end{bmatrix}, \quad  h = \mathbb{E}_{(s, a) \sim d_\gamma^{\pi_\theta}}[h_{t+1}] = \begin{bmatrix}
        0 \\
        -B \\
        0
    \end{bmatrix}
\end{align*}
Where, 
\begin{gather*}
    A := \Psi(I - \gamma P_{\pi_\theta})\D \Psi^T, \quad B := \Psi \D G^T, \quad C := \Psi \D \Psi, \\
    E_p[\cdot] := \E_{(s, a) \sim d_\gamma^{\pi_\theta}, 
    s' \sim \mathcal{P}(\cdot | s, a), a' \sim \pi_\theta(\cdot | s')}[\cdot]
\end{gather*}
where $\D$ is a diagonal matrix with the diagonal being $d_{\gamma}^{\pi_{\theta}}$. 
We will now prove the convergence of the linear function approximation case.
We first separate the deterministic term $h(d_t)$ and the stochastic term $M_{t+1}$
\begin{align*}
    d_{t+1} = d_{t} + \varepsilon_t(\underbrace{Gd_{t} + g}_{h(d_t)} + \underbrace{(G_{t+1} - G)d_{t} + (g_{t+1} - g))}_{M_{t+1}}
\end{align*}
From here, the proof follows equivalently to ~\cite[Theorem 2]{DBLP:journals/corr/abs-2001-11113}, with exactly same assumptions. The remaining part of the proof requires us to show that the function $h(d)$ is symptomatically stable. We show that by demonstrating that the eigenvalues of the matrix $G$ are strictly negative. 
Let $v$ be an arbitrary eigenvector of $G$ corresponding to an arbitrary eigenvalue $\nu$, then
\begin{align*}
    \mu &= v^T G v \\
    &= v_2^T A v_1 + \lambda v_3^T \Phi^T \D v_1 - v_1^T A v_2 - \lambda v_1^T \Phi^T \D v_3 -v_2^T C v_2 - \lambda v_3^T v_3 \\
    &= -\lambda v_3^T v_3  - v_2^T C v_2 \leq 0
\end{align*}
Now, we just need to show that the eigenvalue is not zero. We will prove that using contradiction. Assume that there exists a $v$ such that $Gv = 0$ and $v \neq 0$. This implies that, 
\begin{align*}
    &-Av_2 - \lambda \Phi^T \D v_3 = 0 \\
    &Av_1 = 0 \\
    &\lambda \D^T\Phi v_1 - \lambda v_3 = 0
\end{align*}
Since $A$ is a non-singular matrix (Assumption \ref{ass:3}) we have $v_1 = 0$ which implies that $v_3 = 0$ from the third equation. This leaves us with $A v_2 = 0$ which leaves us with $v_2 = 0$, which is a contradiction. This finishes the proof. It now is proven that the eigenvalue is strictly negative. This completes the proof. 
\end{proof}
\subsection{Proof of Theorem \ref{th:sample_complexity}}\label{sec:additional_theory}
\begin{proof}
The proof of this Theorem is almost similar to the proof of ~\citet{DBLP:journals/corr/abs-2001-11113} except the need to bound different terms. To simplify the proof, we first lump the maximization and minimization variables together and update these parameters in matrix form. We call the grouped maximization variables $y = (\beta, \tau)$ and minimization variable $\alpha$. We re-write Algorithm \ref{alg:dice_ldg} in matrix form as follows (in Algorithm \ref{alg:dice_ldg_2}), 
\begin{algorithm}[!htb]
  \caption{Projected Log Density Gradient}\label{alg:dice_ldg_2}                          
  \begin{algorithmic}[1]
    \STATE \textbf{for} {$i = 1, 2, ..., n$ do:}
    \STATE $\alpha_{t+1} = \Pi_{X}(\alpha_t - \varepsilon_t(G_{1, t}y_t))$
    \STATE $y_{t+1} = \Pi_{Y, Z}(y_{t+1} + \varepsilon_t(G_{2, t}\alpha_t + G_{3. t}y_t + G_{4, t})$
    \STATE \textbf{Return} $\bar{\alpha}, \bar{y}$\\
    Where, $\bar{\alpha} = \frac{\sum_{i=1}^n \varepsilon_i \alpha_i}{\sum_{{i=1}}^{n}\varepsilon_i}$ , $\bar{y} = \frac{\sum_{i=1}^n \varepsilon_i y_i}{
    \sum_{i=1}^{n} \varepsilon_i}$
    
  \end{algorithmic}
\end{algorithm}
where, 
\begin{align*}
&G_{1, t} := \begin{bmatrix}
    -(\Phi_t\Phi_t^T - \gamma \Phi_t \Phi_t'^T) & -\lambda \Phi_t \\
\end{bmatrix} \\
&G_{2, t} := \begin{bmatrix}
    \Phi_t \Phi_t^T - \gamma \Phi_t \Phi_t^{'T} \\
    \lambda \Phi_t^T
\end{bmatrix}, G_{3, t} := \begin{bmatrix}
    -\Phi_t \Phi_t^T & 0 \\
    0 & -\lambda 
\end{bmatrix} \\
&G_{4, t} := \begin{bmatrix}
        -\Phi_{t} g_{t}^T \\ 
        0 
\end{bmatrix}.
\end{align*}
We also project our variables $\alpha_{t}, y_{t}$ on the closed and convex sets $X \subset \mathbb{R}^{d \times n}, Y \subset \mathbb{R}^{d \times n} \times \mathbb{R}^{1 \times n}$. We output the weighted average $\bar{\alpha}$ and $\bar{y}$. Before, proposing a sample complexity bound, we first define the optimization gap $\epsilon_g(\alpha, y)$ as follows, 
\begin{align*}
    \epsilon_g(\alpha, y) := \max_{y' \in {Y}} L(\alpha, y') - \min_{\alpha' \in X} L(\alpha', y)
\end{align*}
Note that $\epsilon_g(\alpha_*, y_*) = 0$, where $\alpha_*, y_*$ are the solutions to the min-max problem. From here, on the proof follow almost similarly to ~\citet{DBLP:journals/corr/abs-2001-11113}. The proof follows from \cite[Proposition 3]{DBLP:conf/uai/LiuLGMP15} and  \cite[Proposition 2]{DBLP:journals/corr/abs-2001-11113}, both of which rely on  \cite[Proposition 3.2]{DBLP:journals/siamjo/NemirovskiJLS09} to state the $O\left(\sqrt{\frac{1}{n}}\right)$ bound.
    Proposition 3.2 in \cite{DBLP:journals/siamjo/NemirovskiJLS09} says that set $X, Y, Z$ should be closed, convex and bounded sets, which is part of our assumption. Our min-max loss function is Lipschitz continuous and the minimization problem is convex and the maximization problem is concave. It can also be seen that both the primal and dual form of the optimization has equal optimal values $\alpha_*, y_*$. We now proceed to apply the bound proposed by \cite{DBLP:journals/siamjo/NemirovskiJLS09}. To that end, we first need to bound certain terms. We define $D_{\alpha}$ and $D_{Y}$ as follows, 
\begin{align*}
    D_{\alpha} = \max_{x \in X} \|x\|^2 - \min_{x \in X} \|x\|^2\\
    D_{Y} = \max_{y \in Y} \|y\|^2 - \min_{y \in Y} \|Y\|^2
\end{align*}
From Assumption \ref{ass:3} which says the second moment of all the features is bounded  we can similarly write the following bound
\begin{align*}
    \EE[\|G_{i, t} - G_{i}\|^2 ] \leq \sigma_i^2 \quad \forall i \in \{1, 2, 3, 4\}.
\end{align*}
Therefore, we obtain bounds for the stochastic sub-gradient $G_{\alpha}(\alpha, y), G_{Y}(\alpha, y)$ as follows, 
\begin{align*}
    &G_{\alpha}(\alpha, y) = G_{1, t}y_t \\
    &G_{Y}(\alpha, y) = (G_{2. t}\alpha_t + G_{3, t}y_t + G_{4, t})
\end{align*}
and for their second moment as follows, 
\begin{align*}
    \EE[\| G_{\alpha}(\alpha, y)\|^2] &= \sigma_1^2 D_Y^2 + \sigma_1^2 \|\bar{G}_{1, t}\|^2 \leq C_\alpha \\
    \EE[\| G_{Y}(\alpha, y)\|^2] &= \sigma_2^2 D_Y^2 + \sigma_2^2 \|\bar{G}_{2, t}\|^2 + \sigma_3^2 D_Y^2 + \sigma_3^2 \|\bar{G}_{3, t}\|^2 + \sigma_4^2\|\bar{G}_{4, t}\|^2\leq C_Y, \\
\end{align*}
where we used the fact that $\EE[\| x \|^2] \leq \EE[\|x - \EE[x]\|^2] + \|\EE[x]\|^2$. If we now follow the procedure as proposed by \cite{DBLP:journals/siamjo/NemirovskiJLS09} we can define $M_*$ as follows, 
\begin{align*}
    M_*^2 = 2 C_\alpha^2 D_{\alpha^2} + 2 C_Y^2 D_Y^2
\end{align*}
If we fix the learning rate as $\varepsilon_t = \frac{c}{M_* \sqrt{t}}$ for any positive constant $c$. We can now bound the optimality gap with probability at least $1-\delta$ using  \cite[Proposition 3.2]{DBLP:journals/siamjo/NemirovskiJLS09}
\begin{align*}
     \epsilon(\bar{\alpha}, \bar{y}) \leq \frac{5}{n}(8 + 2 \log \frac{2}{\delta})M_* \max\{c, \frac{1}{c}\}
\end{align*}
This completes the proof. 
\end{proof}
\end{document}